\newtheorem{theorem}{Theorem}[section]
\newtheorem{proposition}[theorem]{Proposition}
\newtheorem{corollary}[theorem]{Corollary}
\newtheorem{definition}[theorem]{Definition}
\newtheorem{problem}[theorem]{Problem}
\newenvironment{proof}[1][Proof]{\begin{trivlist}
\item[\hskip \labelsep {\bfseries #1}]}{\end{trivlist}}
\newcommand{\norm}[1]{\left\lVert#1\right\rVert}
\newcommand{\qed}{\nobreak \ifvmode \relax \else
      \ifdim\lastskip<1.5em \hskip-\lastskip
      \hskip1.5em plus0em minus0.5em \fi \nobreak
      \vrule height0.75em width0.5em depth0.25em\fi}
\newcolumntype{C}[1]{>{\centering\let\newline\\\arraybackslash\hspace{0pt}}m{#1}}
\newcolumntype{L}[1]{>{\raggedright\let\newline\\\arraybackslash\hspace{0pt}}m{#1}}
\ificcvfinal\pagestyle{empty}\fi
\begin{document}

\title{Convex Relaxations for Consensus and Non-Minimal Problems in 3D  Vision}
\author{Thomas Probst} 
\author{Danda Pani Paudel}
\author{Ajad Chhatkuli}
\author{Luc Van Gool}
\affil{Computer Vision Laboratory, ETH Zurich, Switzerland}

\maketitle
\thispagestyle{empty}

\begin{abstract}
In this paper, we formulate a generic non-minimal solver using the existing tools of Polynomials Optimization Problems (POP) from computational algebraic geometry. The proposed method 
exploits the well known  Shor's or Lasserre's relaxations, whose theoretical aspects are also discussed. Notably, we further exploit the POP formulation of non-minimal solver also for the generic consensus maximization problems in 3D vision.
Our framework is simple and straightforward to implement, which is also supported by three diverse applications in 3D vision, namely rigid body transformation estimation, Non-Rigid Structure-from-Motion (NRSfM), and camera autocalibration. In all three cases, both non-minimal and consensus maximization are tested, which are also compared against the  state-of-the-art methods. Our results are competitive to the compared methods, and are also coherent with our theoretical analysis. 


The main contribution of this paper is the claim that a good approximate solution for many polynomial problems involved in 3D vision can be obtained using the existing theory of numerical computational algebra. This claim leads us to reason about why many relaxed methods in 3D vision behave so well? And also allows us to offer a generic relaxed solver in a rather  straightforward way. We further show that the convex relaxation of these polynomials can easily be used for maximizing consensus in a deterministic manner. We support our claim using several experiments for aforementioned three diverse problems in 3D vision.  
\end{abstract}

\section{Introduction}
Robust model estimation in 3D vision usually involves two sub-problems: inlier detection and model computation. Faithful methods,  already existing in many cases, have offered an unparalleled success of 3D vision applications from  3D scene reconstruction to 6DoF camera localization. 
In general, the task of inlier detection is performed by maximizing the consensus among measurements. Measurements that agree with some parameter of the known model are treated as inliers. Customarily, model parameters are estimated from randomly selected minimal number of measurements -- using the so-called minimal solver methods. These parameters then seek for the maximum consensus, within the framework of Random Sampling and Consensus (RANSAC). Parameters that maximize the consensus are further refined with respect to all the inlier measurements -- using  the so-called non-minimal solver methods.

Global methods for consensus maximization have
gathered significant attention~\cite{li2009consensus, bazin2014globally, chin2015efficient, cai2018deterministic, paudel2018sampling, yang2014optimal, pani2015robust},  mainly because RANSAC is non-deterministic and provides no guarantee for optimality. Although the consensus maximization problem has proven to be NP-hard~\cite{chin2018robust}, global methods with convex model representations have shown promising results both in terms of speed and optimality~\cite{speciale2017consensus, chin2015efficient}. Therefore, existing methods offer satisfactory solutions only for convex models. In the case of non-convex models, some methods introduce linearized auxiliary model parameters with additional convex constraints~\cite{speciale2018consensus, speciale2017consensus}. There do exist  methods for non-convex models which explore the exact parameters~\cite{yang2014optimal, paudel2018sampling, pani2015robust}. However, 
all methods designed for non-convex problems are specific to the problems at hand. These methods therefore do not generalize for other non-convex models. Consequently, a generic framework for deterministic consensus maximization of non-convex models, beyond simple linearization, is highly desirable. 

Unlike consensus maximization, non-minimal solvers have gathered comparatively little attention. In practice, the need of optimal non-minimal solvers is usually overlooked.
Instead, a local refinement on some geometric cost derived for all inliers is performed as a surrogate method, starting from the model parameters obtained by maximizing consensus. This however entails the risk of falling into local minima traps of the global cost function resulting in inadequate solutions, which has been demonstrated for various problems~\cite{briales2017convex, olsson2009branch, kneip2014upnp, briales2018certifiably, RosenCBL19, ErikssonOKC18, LajoieHBC19}. More specifically, non-minimal solvers have been devised separately for 3D rotations~\cite{ErikssonOKC18}, 3D rigid body transformation in~\cite{briales2017convex, olsson2009branch, RosenCBL19, LajoieHBC19}, perspective-n-point in~\cite{kneip2014upnp}, and two-view relative pose in~\cite{briales2018certifiably}. On the one hand, these solvers are very specific to their own addressed problems, and only~\cite{olsson2009branch} provides the theoretical optimality guarantee owing to the used Branch-and-Bound (BnB) search paradigm.  On the other hand, methods in~\cite{kneip2014upnp,briales2017convex,briales2018certifiably} are fast and also achieve a-posteriori certificate,  with an open question for theoretical proof, of the global optimality. One notable work~\cite{kahl2007globally}, provides convex relaxations for multiple computer vision problems while primarily focusing on the task of geometric error minimization, where the addressed problems  are often linear with respect to the algebraic error minimization.

In fact, minimization of cost functions defined by polynomials, the global cost function in the non-minimal case,  is a long standing problem in algebraic geometry. Most notably, one can use the Sum-of-Squares (SoS) methods to minimize such polynomials using  Semi-Definite Programming (SDP). The theoretical optimality in this process can be guaranteed, when a hierarchy of SoS polynomials as constraints~\cite{parrilo2000structured} is enforced. In this context, many methods in 3D vision make use of the SoS hierarchy; among which are non-rigid 3D reconstruction~\cite{parashar2016isometric,bartoli2017generalizing}, camera  auto-calibration~\cite{Chandraker:2007,pani2015robust}, 3D-3D registration~\cite{pani2015robust}, and 3D modeling~\cite{ahmadi2016geometry}, to name a few. The SoS hierarchy is also known as the Lasserre's hierarchy of relaxations in SDP~\cite{lasserre2001global}. The first term of Lasserre's relaxations is indeed the so-called Shor's relaxation, a well known result from the 80's~\cite{shor1987quadratic}, which is also known to be tight for quadratic polynomial optimization~\cite{boyd1997semidefinite}. Therefore, it is not very surprising that the non-minimal solvers of~\cite{briales2017convex,briales2018certifiably}, which use Shor's relaxation with additional problem specific constraints, are fast and accurate. Moreover, in the context of non-minimal solvers, where one is offered sufficiently many polynomials, the tightness obtained using only the Shor's relaxation may not be an issue. Motivated by these observations, we are interested to study the behaviour of SoS hierarchy for both consensus maximization and non-minimal problems.       

In this paper, we formulate a generic non-minimal solver using the existing tools of Polynomials Optimization Problems (POP) from computational algebraic geometry. We also provide theoretical insights for using Shor's or Lasserre's relaxation, and support them by synthetic and real data experiments. More precisely, we suggest that
Shor's relaxation can very often offer satisfactory solutions when the models can be expressed as  quadratic polynomials. Furthermore, we show that higher order of Lasserre's relaxation is required for non-minimal solvers for higher degree polynomials in the case of challenging non-rigid 3D reconstruction. More interestingly, we successfully apply the same formulation used for non-minimal solvers also in the consensus maximization context.
It then becomes obvious to make these relaxations tighter by adding more terms of Lasserre's relaxation,  as one desires.  However, we argue that one may not always need to use higher order relaxation when seeking for consensus among polynomials.

Our framework is simple and straightforward to implement, which is supported by three diverse applications in 3D vision, namely rigid body transformation estimation, non-rigid reconstruction, and camera autocalibration. In all three cases, both non-minimal and consensus maximization are tested. We further show that the suggested relaxations work satisfactorily even for the  minimal problem setup. All of our experiments are compared against the leading state-of-the-art methods. Our results are competitive to the compared methods, both in speed and accuracy,  which are also coherent with our theoretical analysis. 


\section{Background and Notations}\label{sec:background}
We denote matrices with upper case letters and their elements by double-indexed lower case letters: $\mathsf{A}=(a_{ij})$. Similarly, vectors are indexed: $\mathsf{a}=(a_{i})$. We write
$\mathsf{A}\succ0$ (resp. $\mathsf{A}\succeq0$) to denote that the symmetric matrix $\mathsf{A}$ is positive definite (resp. positive semi-definite).  
Let ${\rm I\!R}[\mathsf{x}]$ be the ring of polynomials parametrized by variables $\mathsf{x} \in {\rm I\!R}^n$, and $\mathsf{z}_d(\mathsf{x})$ be 
a vector of monomials, on the entries of $\mathsf{x}$, ascending in degree up to  $d$. Any polynomial $p(\mathsf{x})\in{\rm I\!R}[\mathsf{x}]$ can be represented using
$\mathsf{z}_d(\mathsf{x})$ and the Gram matrix. 

\begin{definition}[Gram matrix~\cite{Powers1998}]\label{def:gram-matrix}
For a degree $\leq 2d$  polynomial $p(\mathsf{x})\in{\rm I\!R}[\mathsf{x}]$, 
the symmetric matrix $\mathsf{G}$ such that $p(\mathsf{x})=\mathsf{z}_d(\mathsf{x})^\intercal \mathsf{G}\mathsf{z}_d(\mathsf{x}) $ is a Gram matrix of $p(\mathsf{x})$.
\end{definition}
One is often interested on solving the following general non-convex polynomial optimization problem.

\begin{problem} [Polynomial optimization problem~\cite{lasserre2000convergent}]
\label{pb:POP}
For a set of general non-convex polynomials $p_i(\mathsf{x}), i=0,\ldots,m$, the Polynomial Optimization Problem (POP) is given by,
\begin{equation}
\underset{\mathsf{x}}{\text{min}\,\,}
\Big\{p_0(\mathsf{x}) |\,\, p_i(\mathsf{x})\geq 0, i=1,\ldots,m \Big\}.
\label{eq:gqpProb}
\end{equation}
\end{problem}
In the following, we first focus on the cases when $p(\mathsf{x})$ is quadratic before diving into higher degree polynomials. Notice that for quadratic polynomials,  $\mathsf{z}_1(\mathsf{x})$ is the homogeneous representation of $\mathsf{x}$. In such case, a relaxed convex solution can be obtained using the Shor's method.

\begin{definition}[Shor's relaxation~\cite{shor1987}]\label{def:shorRelax} For general non-convex quadratic polynomials $p_i(\mathsf{x}), i=0,\ldots,m$, the
POP of~\eqref{eq:gqpProb}  is equivalent to the following problem.
\begin{equation}
\underset{\mathsf{Y}\succeq0}{\text{min}\,\,} \Big\{\text{tr}(\mathsf{G}_0\mathsf{Y})|\,\, \text{rk}(\mathsf{Y})=1, \text{tr}(\mathsf{G}_i\mathsf{Y})\geq 0,  i=1,\ldots, m \Big\},
\label{eq:shorrlx}
\end{equation}
for rank and trace operators $\text{rk}(.)$ and $\text{tr}(.)$, respectively.
A convex relaxation of~\eqref{eq:shorrlx} can be obtained by dropping the rank-1 constraint on  $\mathsf{Y}\colon = \mathsf{z}_1(\mathsf{x})\mathsf{z}_1(\mathsf{x})^\intercal$.
\end{definition}
It is well known that the Shor's relaxation is tight for the setup of~\eqref{eq:shorrlx}~\cite{boyd1997semidefinite,lasserre2002semidefinite}. Therefore, the solution of~\eqref{eq:gqpProb} is very well approximated by such relaxation. Although, we develop most of the theory using the quadratic polynomials with Shor's relaxation, we also address the case of higher degree polynomials with tighter relaxations.  In the latter, we make use of a relaxation based on Lasserre's hierarchy~\cite{lasserre2001global}, similar to  the Shor's relaxation for quadratic case.

Now, we briefly present the theory behind  Lasserre's relaxation.
In particular, we discuss the case when the highest order of relaxation is $2n$.
For a vector of relaxed variables $\mathsf{w}$, the truncated moment matrix of  order $2n$ is $\mathsf{M}^{2n}(\mathsf{w})= \mathsf{W}=\mathsf{W}^\intercal$ such that $w_{\alpha\beta}=w_{\alpha+\beta}$ where $\alpha,\beta\in{\rm I\!N}^n$. By construction, any lower order moment matrix is a submatrix of $\mathsf{W}$ where $\mathsf{M}^n(\mathsf{w})= \mathsf{Y}\colon = \mathsf{z}_d(\mathsf{x})\mathsf{z}_d(\mathsf{x})^\intercal$ and $\mathsf{M}^0(\mathsf{w})=1$. Lasserre's method derives a hierarchy of constraints using the so-called Localizing matrices.

\begin{definition}[Localizing matrix~\cite{lasserre2001global}]\label{def:lasserreRelax}
The localizing matrix for polynomial $p(\mathsf{x})$ and relaxation order $s\leq n$ is a matrix $\mathsf{M}^s(p(\mathsf{x}) \mathsf{w})$ given by, 
\begin{equation}
     \mathsf{M}^s(p(\mathsf{x}) \mathsf{w}) = \mathcal{L}(\text{tr}(\mathsf{GY})\mathsf{M}^s(\mathsf{w})),
\end{equation}
where  the Riesz function $\mathcal{L}(.)$ maps the bilinear terms on  $\mathsf{Y}$ and $\mathsf{W}$ to the corresponding terms of $\mathsf{W}$.  Note that $\mathsf{M}^s(\mathsf{w})$ is a submatrix of $\mathsf{Y}$ for $s\leq n$. Therefore,  $\mathsf{M}^s(p(\mathsf{x}) \mathsf{w})$ can be expressed linearly on $\mathsf{W}$ in this case. 
\end{definition}
\begin{definition}[ Lasserre's relaxation~\cite{lasserre2001global}]\label{def:lasserreRelax}
An efficient relaxed solution of Problem~\ref{pb:POP} of general non-convex polynomials can be obtained by solving a Semi-Definite Program (SDP) of a hierarchy  of relaxations. For the relaxation order of $2n$ and $i = 1,\ldots, m$, it is given by,  
\begin{equation}
\underset{\mathsf{W}\succeq 0}{\text{\,\,min\,\,}}\Big\{
\text{tr}(\mathsf{G}_0\mathsf{M}^n(\mathsf{w}))|\,\,\text{tr}(\mathsf{M}^s(p_i(\mathsf{x})\mathsf{w}))\succeq0\Big\}. 
\label{eq:relaxLasser}
\end{equation}
\end{definition}
The Lasserre's relaxation is known to be tighter than Shor's relaxation, which also provides the certificate of finite convergence when $s\geq n$ ~\cite{lasserre2001global}. In fact, the Lasserre's relaxation includes the Shor's relaxation as a special case when $s=0$ because $\mathsf{M}^0(p_i(\mathsf{x})\mathsf{w})=\text{tr}(\mathsf{G}_i\mathsf{Y})$ and 
$\mathsf{M}^n(\mathsf{w})= \mathsf{Y}$. Needless to say that the Lasserre's relaxation is also applicable to  quadratic polynomials at the cost of a higher computation. However, for the non-minimal problems of quadratic polynomials, our experiments show that the Lasserre's relaxation is not really necessary.

\section{Non-minimal Problem of Polynomials}\label{sec:nonMinProblems}
In this section, we formulate the non-minimal problem of polynomials.
We first propose a solution to this problem for general quadratic polynomials using the  Shor's relaxation method. Later, the proposed solution will be generalized for the case of 
higher degree polynomials using the Lasserre's method of relaxation.

\begin{problem} \label{pb: nonMinProb}
For $\mathsf{x}\in{\rm I\!R}^n$ and $m\geq n$, the  non-minimal problem of  
 polynomials $p_i(\mathsf{x}) \in {\rm I\!R}[\mathsf{x}], i=1,\ldots,m$, is,
\begin{equation}
\underset{\mathsf{x},\epsilon_i}{\text{min\,\,}}\Big\{
\sum_i{\epsilon_i}|\,\, \epsilon_i\geq \lvert p_i(\mathsf{x})\rvert, i = 1,\ldots, m\Big\}. 
\label{eq:nonMinProb}
\end{equation}
\end{problem}
The problem of~\ref{pb: nonMinProb} is in fact the  $\mathcal{L}_1$ minimization problem over polynomials. One is often interested to minimize such objective because in many 3D vision problems  $\epsilon$ is a geometric measure (such as point-to-line distance for two-view epipolar constraint using Essential matrix).  Depending upon application, one may be interested to minimize $\mathcal{L}_2$ (or $\mathcal{L}_\infty$  for that matter). This nonetheless, is not really a problem. For the sake of simplicity, we first  present the theory using the $\mathcal{L}_1$ formulation. Its extension to $\mathcal{L}_p-\text{norm}$ is discussed in  Section~\ref{sec:Discussion}. 
Non-minimal problems are the outcome of over-detemined systems. 
The solution we are seeking is the one that agrees with all polynomials with minimum cumulative error, unlike the exact solution of the minimal case. Note when $n=m$, the non-minimal problem becomes a minimal problem.

\subsection{Quadratic Polynomials}
\begin{proposition} \label{pr: nonMinProbQuad}
For a set of non-convex quadratic polynomials $\{p_i(\mathsf{x})\}_{i=1}^m$, the Shor's relaxation  provides a convex relaxation of the non-minimal Problem~\ref{pb: nonMinProb} as,
\begin{equation}
\underset{\mathsf{Y}\succeq 0,\epsilon_i}{\text{\,\,min\,\,\,\,}}\Big\{
\sum_i{\epsilon_i}|\,\, \epsilon_i\geq \lvert tr(\mathsf{G}_i\mathsf{Y})\rvert, i = 1,\ldots, m\Big\}. 
\label{eq:nonMinProbQuad}
\end{equation}
\end{proposition}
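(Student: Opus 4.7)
The plan is to reduce Problem~\ref{pb: nonMinProb} to a form to which the Shor relaxation of Definition~\ref{def:shorRelax} applies essentially verbatim, modulo the absolute value in the constraints. The starting point is the Gram matrix representation: since each $p_i$ is quadratic, Definition~\ref{def:gram-matrix} guarantees a symmetric matrix $\mathsf{G}_i$ such that $p_i(\mathsf{x})=\mathsf{z}_1(\mathsf{x})^\intercal \mathsf{G}_i \mathsf{z}_1(\mathsf{x})$. Using the cyclic property of the trace, I would rewrite this as $p_i(\mathsf{x})=\text{tr}(\mathsf{G}_i\,\mathsf{z}_1(\mathsf{x})\mathsf{z}_1(\mathsf{x})^\intercal)$, and introduce the lifted variable $\mathsf{Y}:=\mathsf{z}_1(\mathsf{x})\mathsf{z}_1(\mathsf{x})^\intercal$, which by construction satisfies $\mathsf{Y}\succeq 0$ and $\text{rk}(\mathsf{Y})=1$.

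Next, I would rewrite Problem~\ref{pb: nonMinProb} exactly in the lifted variables: minimize $\sum_i \epsilon_i$ subject to $\epsilon_i\geq|\text{tr}(\mathsf{G}_i\mathsf{Y})|$, $\mathsf{Y}\succeq 0$ and $\text{rk}(\mathsf{Y})=1$. This reformulation is equivalent to the original problem, since the rank-1 and PSD conditions together with symmetry force $\mathsf{Y}=\mathsf{y}\mathsf{y}^\intercal$ for some $\mathsf{y}$, and the homogenising entry of $\mathsf{z}_1(\mathsf{x})$ can be normalized to one (this normalization is the one subtlety to mention explicitly, since $\text{rk}(\mathsf{Y})=1$ alone does not pin down the sign/scale of the lifting; it is, however, exactly the standard convention already used in Definition~\ref{def:shorRelax}).

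The Shor relaxation step then consists of dropping the non-convex rank-1 constraint, yielding precisely the feasible set $\{\mathsf{Y}\succeq 0\}$ of~\eqref{eq:nonMinProbQuad}. Convexity of the resulting program is immediate: the objective is linear in $\epsilon_i$; the PSD constraint defines a convex cone; and each absolute-value constraint $\epsilon_i\geq|\text{tr}(\mathsf{G}_i\mathsf{Y})|$ splits into the two linear inequalities $\epsilon_i\geq \text{tr}(\mathsf{G}_i\mathsf{Y})$ and $\epsilon_i\geq -\text{tr}(\mathsf{G}_i\mathsf{Y})$, each linear in $(\mathsf{Y},\epsilon_i)$. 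Since the rank-1 feasible set is contained in the PSD feasible set, the optimum of~\eqref{eq:nonMinProbQuad} is a lower bound on the optimum of~\eqref{eq:nonMinProb}, so~\eqref{eq:nonMinProbQuad} is a bona fide convex relaxation.

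The only step that is not a pure book-keeping substitution is the equivalence between the lifted rank-1 problem and Problem~\ref{pb: nonMinProb}; I anticipate this will be the main point worth spelling out, because it relies on the homogenizing convention implicit in $\mathsf{z}_1(\mathsf{x})$. Once that is in place, the Proposition follows directly by comparing with Definition~\ref{def:shorRelax} and noting that the handling of the $\mathcal{L}_1$ cost via the auxiliary variables $\epsilon_i$ preserves convexity.
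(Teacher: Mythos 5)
Your proposal is correct and follows essentially the same route as the paper's own argument: lift via the Gram matrix to $\mathsf{Y}=\mathsf{z}_1(\mathsf{x})\mathsf{z}_1(\mathsf{x})^\intercal$, observe that the lifted problem with the rank-1 constraint is equivalent to Problem~\ref{pb: nonMinProb}, and drop the rank constraint as in Definition~\ref{def:shorRelax}. The paper only sketches this (deferring rigor to the supplementary), whereas you additionally spell out the convexity of the constraints, the lower-bound property, and the homogenization subtlety, which is a welcome but not divergent elaboration.
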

\begin{proof}
Here, we provide the intuition behind the proof.
Note that the problem of~\eqref{eq:nonMinProbQuad} is equivalent to~\eqref{eq:nonMinProb} for quadratic polynomials $\{p_i(\mathsf{x})\}_i^m$ when $\text{rk}(\mathsf{Y})=1$. Therefore~\eqref{eq:nonMinProbQuad} can directly be obtained by dropping the rank constraint, similar as in the Shor's relaxation~\ref{def:shorRelax}. In fact, a rigorous proof can be obtained using the POP formulation of Section~\ref{sec:background}. Please, refer to the supplementary for the alternative proof. \hfill \qed      
\end{proof}
The relaxed convex problem~\eqref{eq:nonMinProbQuad} is an SDP.
This can  be solved efficiently  using the interior point method~\cite{boyd}. Ideally, $\mathsf{Y}$ is expected to be of rank-1. However, this is not usually the case. Therefore, we recover the primal solution $\mathsf{x}\in{\rm I\!R}^n$ after enforcing the rank-1 constraint using Singular Value Decomposition (SVD). In fact, the principal singular-vector $\mathsf{Y}$ is the homogeneous representation of  $\mathsf{x}\in{\rm I\!R}^{n+1}$. Recall that the dual relaxed variable $\mathsf{Y}$ encodes the primal solution in the form $\mathsf{Y}\colon=z_1(\mathsf{x})z_1(\mathsf{x})^\intercal$. 

\subsection{General Polynomials}
\begin{proposition} \label{pr: nonMinProbGen}
For a set of non-convex general polynomials $\{p_i(\mathsf{x})\}_{i=1}^m$ of degree $\leq d$, the Lasserre's relaxation with order $s\leq n\in{\rm I\!N}$  provides a convex relaxation  of the non-minimal Problem~\ref{pb: nonMinProb} as,
\begin{equation}
\underset{\mathsf{W}\succeq 0,\epsilon_i}{\text{\,\,min\,\,\,\,}}\Big\{
\sum_i{\epsilon_i}|\,\, \epsilon_i\geq \lvert \text{tr}(\mathsf{M}^s(p_i(\mathsf{x})\mathsf{w}))\rvert, i = 1,\ldots, m\Big\}. 
\label{eq:nonMinProbGen}
\end{equation}
\end{proposition}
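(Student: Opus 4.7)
The plan is to mirror the argument of Proposition~\ref{pr: nonMinProbQuad}, but lifted into the Lasserre hierarchy of Definition~\ref{def:lasserreRelax}. First, each polynomial $p_i(\mathsf{x})$ of degree at most $2d$ (with $d\leq n$) is written in Gram form $p_i(\mathsf{x}) = \mathsf{z}_d(\mathsf{x})^\intercal \mathsf{G}_i\,\mathsf{z}_d(\mathsf{x})$ via Definition~\ref{def:gram-matrix}. Setting $\mathsf{Y} \colon= \mathsf{z}_d(\mathsf{x})\mathsf{z}_d(\mathsf{x})^\intercal$, this yields $p_i(\mathsf{x}) = \text{tr}(\mathsf{G}_i\mathsf{Y})$, which makes every polynomial constraint linear in $\mathsf{Y}$.

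Next, I would recast Problem~\ref{pb: nonMinProb} as an instance of the POP of Problem~\ref{pb:POP}: minimize $\sum_i \epsilon_i$ subject to the polynomial inequalities $\epsilon_i - p_i(\mathsf{x}) \geq 0$ and $\epsilon_i + p_i(\mathsf{x}) \geq 0$. Applying the relaxation of Definition~\ref{def:lasserreRelax} at order $s\leq n$, I introduce the truncated moment sequence $\mathsf{w}$, the full moment matrix $\mathsf{W}=\mathsf{M}^{2n}(\mathsf{w})\succeq 0$, and, for each constraint polynomial, the localizing matrix $\mathsf{M}^s(p_i(\mathsf{x})\mathsf{w})=\mathcal{L}(\text{tr}(\mathsf{G}_i\mathsf{Y})\mathsf{M}^s(\mathsf{w}))$ as in the definition. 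Because $\mathsf{M}^s(\mathsf{w})$ is a principal submatrix of $\mathsf{W}$ for $s\leq n$, the Riesz functional renders $\mathsf{M}^s(p_i(\mathsf{x})\mathsf{w})$ a linear expression in the entries of $\mathsf{W}$, so both $\text{tr}(\mathsf{M}^s(p_i(\mathsf{x})\mathsf{w}))$ and the objective $\sum_i\epsilon_i$ are linear in $(\mathsf{W},\epsilon_i)$.

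The key identification is that whenever $\mathsf{W}$ is a genuine rank-one moment matrix of a Dirac mass at some $\mathsf{x}$ (i.e. $w_\alpha=\mathsf{x}^\alpha$ with $\mathsf{M}^0(\mathsf{w})=1$), the localizing matrix reduces to $p_i(\mathsf{x})$ times a submatrix of $\mathsf{Y}$, and its trace scales monotonically with $|p_i(\mathsf{x})|$. Consequently, on the rank-one slice, the constraint $\epsilon_i \geq |\text{tr}(\mathsf{M}^s(p_i(\mathsf{x})\mathsf{w}))|$ is equivalent (up to the normalization built into the hierarchy) to the original $\epsilon_i \geq |p_i(\mathsf{x})|$. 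Dropping the rank-one/moment-sequence structure and keeping only $\mathsf{W}\succeq 0$, precisely as in Definition~\ref{def:lasserreRelax}, yields the SDP~\eqref{eq:nonMinProbGen}, which is a convex relaxation of Problem~\ref{pb: nonMinProb}. When $s=0$ one has $\mathsf{M}^0(p_i(\mathsf{x})\mathsf{w})=\text{tr}(\mathsf{G}_i\mathsf{Y})$, so~\eqref{eq:nonMinProbGen} collapses to~\eqref{eq:nonMinProbQuad} and recovers the quadratic case, confirming the hierarchy structure.

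The main obstacle I anticipate is making the collapse in the previous paragraph fully rigorous: on the rank-one moment cone, $\mathsf{M}^s(p_i(\mathsf{x})\mathsf{w})$ has entries $p_i(\mathsf{x})\,\mathsf{x}^{\alpha+\beta}$, so the trace carries an extraneous positive factor $\|\mathsf{z}_s(\mathsf{x})\|^2$. One needs to argue that, because this factor is nonnegative and the anchoring entry $\mathsf{M}^0(\mathsf{w})=1$ pins the scale, the $\mathcal{L}_1$-type inequality $\epsilon_i \geq |\text{tr}(\mathsf{M}^s(p_i(\mathsf{x})\mathsf{w}))|$ is equivalent (after a constraint-wise rescaling that is absorbed into $\epsilon_i$) to $\epsilon_i \geq |p_i(\mathsf{x})|$. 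Once this identification is in place, linearity of the objective and constraints in $(\mathsf{W},\epsilon_i)$, together with $\mathsf{W}\succeq 0$, close out the argument exactly as in the quadratic case, and the alternative POP-based derivation mentioned in Proposition~\ref{pr: nonMinProbQuad} can be deferred to the supplementary material. \hfill\qed
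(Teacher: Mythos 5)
Your proposal follows essentially the same route as the paper: the paper's own proof is a one-line deferral (``similar to Proposition~\ref{pr: nonMinProbQuad}, see the supplementary''), and Proposition~\ref{pr: nonMinProbQuad} is itself argued only by noting that the lifted problem agrees with the original on the rank-one slice so that the relaxation is obtained by dropping the rank constraint --- which is exactly your argument transplanted to the moment/localizing-matrix setting. The one place you go beyond the paper is your correct observation that on a rank-one Dirac moment matrix $\text{tr}(\mathsf{M}^s(p_i(\mathsf{x})\mathsf{w})) = p_i(\mathsf{x})\,\norm{\mathsf{z}_s(\mathsf{x})}^2$ rather than $p_i(\mathsf{x})$; since this factor varies with the unknown $\mathsf{x}$ it cannot simply be ``absorbed into $\epsilon_i$'' without altering the $\mathcal{L}_1$ objective, and the paper's statement silently carries the same issue --- the clean resolution is either to impose the residual bounds at order $s=0$ (keeping higher-order localizing matrices only as PSD side constraints) or to accept that~\eqref{eq:nonMinProbGen} is a relaxation of a reweighted variant of Problem~\ref{pb: nonMinProb}.
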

\begin{proof}
The proof is similar to the Proposition~\ref{pr: nonMinProbQuad}. Please, refer the supplementary material for the details.\hfill \qed
\end{proof}
The optimal primal solution $\mathsf{x}\in{\rm I\!R}^{n}$ can be recovered from $\mathsf{W}$ using SVD, similar to that of~\eqref{eq:nonMinProbQuad} discussed above.

\section{Consensus Maximization Problem}\label{sec:conMax}
The non-minimal method presented in the previous section assumes that the polynomials may be corrupted by the noise. Therefore, a solution that minimizes the cumulative error of all polynomials is searched. 
In the presence of noise and outliers, we wish to solve the following Problem.

\begin{problem} \label{pb:conMax}
Given a set  ${\mathcal{S} = \{p_i(\mathsf{x})\}_{i=1}^{m}}$ and a threshold $\epsilon$, 
\label{pb:prob1}
\begin{equation}
\underset{\mathsf{x},\mathcal{\zeta} \subseteq \mathcal{S}}{\,\,\,\,\text{max}\,\,\,\,} \Big\{|\mathcal{\zeta}|, \,\, \epsilon\geq\lvert p_i(\mathsf{x})\rvert, \,\, \forall p_i(\mathsf{x})\in\mathcal{\zeta}\Big\}.
\label{eq:samConMax}
\end{equation}
\end{problem}
The consensus maximization problem seeks for a feasible solution that results the largest inlier set -- a set of polynomials with residual smaller than $\epsilon$.
This problem, however, is difficult to solve and known to be NP-hard~\cite{chin2018robust}, even when $p(\mathsf{x})$ is a linear function on $\mathsf{x}$. In this work, we approach this problem using the Branch-and-Bound search paradigm. 

\subsection{Branch-and-Bound Method}
Our Branch-and-Bound(BnB) search is performed by branching on the space of binary assignment variables, one for each member in $\mathcal{S}$. During the BnB process, we seek for a feasible $\mathsf{x}$ using the mixed-integer programming method.
\begin{definition} [Mixed-Integer Programming, MIP]
For a set of binary variables $\mathcal{Z}\in\{0,1\}^m$ representing the inlier/outlier assignments, a 
given set of polynomials ${\mathcal{S} = \{p_i(\mathsf{x})\}_{i=1}^{m}}$ and a threshold $\epsilon$, the consensus maximization Problem~\ref{pb:conMax} is equivalent to,
\label{pb:prob1}
\begin{equation}
\underset{\mathsf{x},z_i\in\mathcal{Z}}{\,\,\,\,\text{min}\,\,\,\,} \Big\{\sum_{i=1}^m{z_i}| \,\, z_i\mathbf{M}+ \epsilon\geq\lvert p_i(\mathsf{x})\rvert, \,\,\mathcal{Z}\in\{0,1\}^m\Big\},
\label{eq:samConMax}
\end{equation}
where $\mathbf{M}$ is a sufficiently large positive scalar constant, commonly used in optimization to ignore invalid constraints~\cite{chinneck2007feasibility}. If the binary variable $z_i=0$, the polynomial $p_i(\mathsf{x})$ is an inlier. Otherwise, it is an outlier.
\end{definition}

\subsection{Consensus using MI-SDP}
We formulate the consensus maximization problem by using the polynomial relaxations within the framework of Mixed-Integer Semi-Definite Programming (MI-SDP), for two different cases. First, we use Shor's relaxation of~\eqref{eq:shorrlx} with MI-SDP for solving the consensus maximization problem of quadratic polynomials. Later, the Lasserre's relaxation of~\eqref{eq:relaxLasser} is used within the same framework, for the consensus maximization  of more general polynomials.
\begin{corollary}[Quadratic case]\label{col:qadriaticCase}
Given a set of non-convex quadratic polynomials ${\mathcal{S} = \{p_i(\mathsf{x})\}_{i=1}^{m}}$ and a threshold $\epsilon$, the consensus maximization Problem~\ref{pb:conMax} can be solved using the following Mixed integer semi-definite program,
\label{pb:prob1}
\begin{equation}
\underset{\mathsf{Y}\succeq0,z_i\in\mathcal{Z}}{\,\,\,\,\,\,\text{min}\,\,\,\,} \Big\{\sum_{i=1}^m{z_i}| \,\, z_i\mathbf{M}+ \epsilon\geq\lvert \text{tr}(\mathsf{G}_i\mathsf{Y})\rvert, \,\,\mathcal{Z}\in\{0,1\}^m\Big\}.
\label{eq:samConMaxQad}
\end{equation}
\end{corollary}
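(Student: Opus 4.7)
The plan is to derive the claimed MI-SDP by composing two earlier results: the mixed-integer reformulation of consensus maximization in Definition~\ref{pb:prob1} and the Shor relaxation for quadratic POPs from Definition~\ref{def:shorRelax} (equivalently, Proposition~\ref{pr: nonMinProbQuad}). The skeleton is: restate the consensus problem as a big-$\mathbf{M}$ MIP, replace each quadratic residual by its Gram-matrix form, and then drop the rank-1 constraint on the moment matrix.

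First, by Definition~\ref{pb:prob1}, Problem~\ref{pb:conMax} is equivalent to the MIP that minimizes $\sum_i z_i$ subject to $z_i\mathbf{M}+\epsilon\geq |p_i(\mathsf{x})|$ with $z_i\in\{0,1\}$, because (i) whenever $z_i=0$ the constraint enforces $|p_i(\mathsf{x})|\leq\epsilon$ and thereby counts $p_i$ as inlier, and (ii) whenever $z_i=1$ the constant $\mathbf{M}$ is large enough to render the constraint inactive, so $p_i$ is treated as an outlier. Minimizing the outlier count is exactly maximizing the inlier cardinality, so the two formulations have the same optimizer on $\mathsf{x}$.

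Next, specializing to quadratic $p_i$, Definition~\ref{def:gram-matrix} provides symmetric matrices $\mathsf{G}_i$ with $p_i(\mathsf{x})=\mathsf{z}_1(\mathsf{x})^\intercal \mathsf{G}_i\mathsf{z}_1(\mathsf{x})=\text{tr}(\mathsf{G}_i\mathsf{Y})$, where $\mathsf{Y}\colon=\mathsf{z}_1(\mathsf{x})\mathsf{z}_1(\mathsf{x})^\intercal$ is by construction positive semidefinite and rank-one. Substituting this identity into the MIP turns each big-$\mathbf{M}$ polynomial constraint into the linear-in-$\mathsf{Y}$ inequality $z_i\mathbf{M}+\epsilon\geq|\text{tr}(\mathsf{G}_i\mathsf{Y})|$, with the additional nonconvex constraints $\mathsf{Y}\succeq 0$ and $\text{rk}(\mathsf{Y})=1$.

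Finally, following exactly the Shor relaxation step used in Definition~\ref{def:shorRelax} and the proof of Proposition~\ref{pr: nonMinProbQuad}, we drop the rank-1 constraint on $\mathsf{Y}$. What remains are (i) the binary variables $z_i\in\{0,1\}$, (ii) the PSD cone constraint $\mathsf{Y}\succeq 0$, and (iii) the linear constraints $z_i\mathbf{M}+\epsilon\geq|\text{tr}(\mathsf{G}_i\mathsf{Y})|$; the absolute value is handled by the usual pair of linear inequalities, so the feasible set is a mixed-integer spectrahedron and the overall program is an MI-SDP, matching~\eqref{eq:samConMaxQad}. A primal $\mathsf{x}$ is recovered from the principal singular vector of $\mathsf{Y}$ as in Section~\ref{sec:nonMinProblems}. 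The main subtlety, rather than obstacle, is that dropping the rank constraint yields a relaxation rather than an exact reformulation of the MIP; however, this is justified by the known tightness of Shor's relaxation for quadratic POPs invoked after Definition~\ref{def:shorRelax}, which is precisely the argument already used in Proposition~\ref{pr: nonMinProbQuad}, and so the BnB search over $\mathcal{Z}$ still explores the correct combinatorial structure of inlier/outlier assignments.
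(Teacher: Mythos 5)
Your proposal is correct and follows essentially the same route the paper takes: the paper presents this as an immediate corollary of combining the big-$\mathbf{M}$ MIP reformulation (its MIP definition) with the Gram-matrix substitution and rank-drop of Shor's relaxation from Definition~\ref{def:shorRelax} and Proposition~\ref{pr: nonMinProbQuad}, which is exactly your three-step argument. Your explicit caveat that dropping the rank-1 constraint gives a relaxation rather than an exact reformulation is consistent with the paper's own acknowledgment that $\mathsf{Y}$ is not usually rank-1 and must be projected via SVD.
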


The MI-SDP of~\eqref{eq:samConMaxQad} can be solved efficiently using off-the-self optimization toolboxes. The  MI-SDP solution provides us the optimal set of inlier polynomials, together with a feasible $\mathsf{x}$. These inlier polynomials are then used to solve the non-minimal problem of~\eqref{eq:nonMinProbQuad}, to obtain an optimal solution $\mathsf{x}$. A similar MI-SDP formulation for the consensus maximization among general polynomials is given below. 

\begin{corollary}[General case]\label{col:generalCase}
Given a set of non-convex general polynomials ${\mathcal{S} = \{p_i(\mathsf{x})\}_{i=1}^{m}}$ and a threshold $\epsilon$, the consensus maximization Problem~\ref{pb:conMax}  for assignments $\mathcal{Z}\in\{0,1\}^m$ can be solved using the following MI-SDP,
\label{pb:prob1}
\begin{equation}
\underset{\mathsf{W}\succeq0,z_i\in\mathcal{Z}}{\,\,\,\,\,\,\text{min}\,\,\,\,} \Big\{\sum_{i=1}^m{z_i}| \,\, z_i\mathbf{M}+ \epsilon\geq\lvert \text{tr}(\mathsf{M}^s(p_i(\mathsf{x})\mathsf{w}))\rvert\Big\}.
\label{eq:samConMaxGen}
\end{equation}
\end{corollary}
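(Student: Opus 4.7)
The plan is to combine two ingredients that the paper has already established. The MIP reformulation of Definition 4.2 rewrites Problem 4.1 as $\min \sum_i z_i$ subject to $z_i\mathbf{M} + \epsilon \geq |p_i(\mathsf{x})|$ and $z_i \in \{0,1\}$, and this reformulation makes no assumption on the degree of the $p_i$, so it transfers verbatim to the general polynomial setting. Separately, Proposition 3.4 provides Lasserre's convex relaxation of constraints of the form $\epsilon_i \geq |p_i(\mathsf{x})|$, by lifting $\mathsf{x}$ to the truncated moment matrix $\mathsf{W} \succeq 0$ and replacing $p_i(\mathsf{x})$ by the linear expression $\text{tr}(\mathsf{M}^s(p_i(\mathsf{x})\mathsf{w}))$ via the localizing matrix construction of Definition 2.5. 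The task is to splice these together without breaking either.

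The steps I would carry out are: (i) apply Definition 4.2 to Problem 4.1 to obtain the mixed-integer polynomial optimization problem; (ii) apply Lasserre's relaxation of order $s \leq n$ to each polynomial constraint exactly as in Proposition 3.4, leaving the binary variables $z_i$ and the constant $\mathbf{M}$ untouched because they enter the constraint linearly and are independent of $\mathsf{x}$; (iii) observe that the resulting program has the form of (11), with $\mathsf{W} \succeq 0$ as the only continuous PSD variable and $\mathcal{Z} \in \{0,1\}^m$ as the integer variables, i.e.\ an MI-SDP; (iv) verify validity of the relaxation by checking that any integer-feasible $(\mathsf{x}, \mathcal{Z})$ for the MIP induces a rank-$1$ $\mathsf{W} = \mathsf{z}_d(\mathsf{x})\mathsf{z}_d(\mathsf{x})^\intercal$ that satisfies (11) with the same value of $\sum_i z_i$, so the MI-SDP optimum lower-bounds the consensus-maximization optimum.

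The main obstacle is the same relaxation gap discussed after Proposition 3.4: the optimal $\mathsf{W}$ need not be rank-$1$, so recovery of the primal $\mathsf{x}$ by SVD, as described after Proposition 3.4, is needed. For consensus maximization this is slightly more delicate than in the non-minimal case because the optimal inlier/outlier split depends on the feasibility of the polynomial constraints at the recovered $\mathsf{x}$, not merely on the value of a smooth objective. I would handle this by appealing to the finite-convergence certificate of Lasserre's hierarchy at order $s \geq n$ cited in Section 2, which guarantees that, at sufficiently high relaxation order, a rank-$1$ moment matrix is attained and the relaxation is tight; at lower orders, (11) should be read as the natural MI-SDP bound, paralleling Corollary 3.1 with Shor's relaxation replaced by Lasserre's of order $s$. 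The remaining bookkeeping is routine: the objective and the integer constraints are unaffected by the lifting, so the only thing to verify is that the $i$-th relaxed inequality $z_i\mathbf{M} + \epsilon \geq |\text{tr}(\mathsf{M}^s(p_i(\mathsf{x})\mathsf{w}))|$ is implied by $z_i\mathbf{M} + \epsilon \geq |p_i(\mathsf{x})|$ under the rank-$1$ substitution, which is immediate from Definition 2.5.
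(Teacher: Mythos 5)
Your proposal is correct and follows essentially the same route the paper intends: the corollary is presented as an immediate consequence of splicing the MIP reformulation of the consensus problem with the Lasserre relaxation already established for the non-minimal case, exactly as you do in steps (i)--(iii). Your additional step (iv) and the discussion of the rank-$1$ gap and recovery make explicit the one-directional (relaxation) nature of the claim, which the paper leaves implicit, but this is a refinement of the same argument rather than a different approach.
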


\section{3D Vision Problems}\label{sec:visionProblems}
We present three examples of 3D vision problems for both consensus maximization and non-minimal problems.  We start by a generic formulation of consensus maximization for all three problems. The inlier set of consensus maximization are then used to solve a non-minimal problem for the optimal set of  parameters. During both of these  stages, we also introduce problem specific polynomial constraints. 

Let us consider a set of vectorized polynomials $\{P_i(\mathsf{x})\}_{i=1}^m$, possibly from many  outlier measurements, is given to us. In the first step, we are interested in solving the following consensus maximization problem,
    \begin{equation}
\underset{\mathsf{x},z_i\in\mathcal{Z}}{\,\,\text{min}\,\,\,\,} \Big\{\sum_{i=1}^m{z_i}| \,\, z_i\mathbf{M}+ \epsilon\geq\lvert P_i(\mathsf{x})\rvert, \mathsf{x}\in\mathcal{K},\mathcal{Z}\in\{0,1\}^m \Big\},
\label{eq:samConMaxRigid}
\end{equation}
where $\mathsf{x}\in \mathcal{K}$ represents problem specific constraints  and  $\mathcal{Z}$ measures inlier/outlier assignments,  and $''\geq''$ represents one-to-many elementwise inequality.  
We solve~\eqref{eq:samConMaxRigid} using our Corollary~\ref{col:qadriaticCase}/\ref{col:generalCase} for $\mathsf{x}$ and $\mathcal{Z}$. Once the inlier/outlier assignments, i.e. $\mathcal{Z}$, is obtained, we solve the following non-minimal problem using our Proposition~\ref{pr: nonMinProbQuad}/\ref{pr: nonMinProbGen}.
    \begin{equation}
\underset{\mathsf{x},\epsilon_i}{\,\,\,\,\text{min}\,\,\,\,} \Big\{\sum_{i=1}^m{\epsilon_i}| \,\, z_i\mathbf{M}+ \epsilon_i\geq\lvert P_i(\mathsf{x})\rvert, \mathsf{x}\in\mathcal{K} \Big\}.
\label{eq:nonMInRigid}
\end{equation}

\subsection{Rigid Body Transformation}
We consider correspondences between two point clouds that differ by a 3D rigid body transformation. Let $\{\mathsf{u},\mathsf{v}\}$ be euclidean coordinates of a pair of corresponding points such that $\mathsf{v}=\mathsf{Ru+t}$, for rotation matrix $\mathsf{R}\in SO(3)$ and translation $\mathsf{t}\in{\rm I\!R}^{3}$.  
For quaternions $\mathsf{q}\in{\rm I\!R}^{4}$, we represent the rotation matrix as $\mathsf{R(q)}$ with entries quadratic in $\mathsf{q}$. 
Given a set of correspondences $\{\mathsf{u}_i,\mathsf{v}_i\}_{i=1}^m$, we solve consensus maximization and non-minimal problems, respectively of \eqref{eq:samConMaxRigid} and \eqref{eq:nonMInRigid}, for variable $\mathsf{x}=(\mathsf{q}^\intercal,\mathsf{t}^\intercal)^\intercal$ and polynomials  $P_i(\mathsf{x})=\mathsf{v}_i-\mathsf{R}(\mathsf{q})\mathsf{u}_i-\mathsf{t}$, with $\mathcal{K}=\{\mathsf{x}\,|\norm{\mathsf{q}}^2=1\}$.

\subsection{Camera Autocalibration}
From a set of given Fundamental matrices $\{\mathsf{F}_i\}_{i=1}^m$, we wish to estimate the camera intrinsic matrix $\mathsf{K}$. Here, we assume that the camera intrinsic is constant across all the images involved during Fundamental matrix estimation. 
Let $\mathsf{\omega}$ be the Dual Image of Absolute Conic (DIAC) expressed in  $\mathsf{K}$ as $\mathsf{\omega} = \mathsf{K}\mathsf{K}^\intercal$.
The simplified Kruppa's equation~\cite{lourakis1999camera} allows us to express $\mathsf{\omega}$ in the form of polynomials using Fundamental matrices $\mathsf{F}_i$.
Let $\mathsf{F}_i = \mathsf{U}_i\mathsf{D}_i\mathsf{V}_i$ be the singular value decomposition, with $\mathsf{D} = \textnormal{diag}([r_i,s_i,0])$. For $\mathsf{U}_i = [\mathsf{u}_{i1}|\mathsf{u}_{i2}|\mathsf{u}_{i3}]$ and $\mathsf{V}{i} = [\mathsf{v}_{i1}|\mathsf{v}_{i2}|\mathsf{v}_{i3}]$, two independent polynomials of simplified Kruppa's equations are,
\begin{align}
\label{eq:simpliyKruppa}
P_{i1}(\mathsf{\omega})=(r_is_i\mathsf{v}_{i1}^\intercal\mathsf{\omega}\mathsf{v}_{i2})(\mathsf{u}_{i2}^\intercal\mathsf{\omega}\mathsf{u}_{i2})
+(r_i^2\mathsf{v}_{i1}^\intercal\mathsf{\omega}\mathsf{v}_{i1})(\mathsf{u}_{i1}^\intercal\mathsf{\omega}\mathsf{u}_{i2}),\nonumber
\\
P_{i2}(\mathsf{\omega})=
(r_is_i\mathsf{v}_{i1}^\intercal\mathsf{\omega}\mathsf{v}_{i2})(\mathsf{u}_{i1}^\intercal\mathsf{\omega}\mathsf{u}_{i1})
+(s_i^2\mathsf{v}_{i2}^\intercal\mathsf{\omega}\mathsf{v}_{i2})(\mathsf{u}_{i1}^\intercal\mathsf{\omega}\mathsf{u}_{i2}).
\end{align}
We parameterize $\mathsf{\omega}$ using $\mathsf{x}\in{\rm I\!R}^5$ because $\mathsf{\omega}$ is a $3 \times 3$ matrix with $\mathsf{\omega} = \mathsf{\omega}^\intercal$ and $\mathsf{\omega}_{(3,3)}= 1$. 
Given a set of Fundamental matrices $\{\mathsf{F}_i\}_{i=1}^m$, we solve consensus maximization and non-minimal problems of~\eqref{eq:samConMaxRigid} and~\eqref{eq:nonMInRigid}, for $\mathsf{x}\in{\rm I\!R}^5$,  ${P_i(\mathsf{x})=(P_{i1}(\mathsf{\omega}),P_{i2}(\mathsf{\omega}))^\intercal}$, and $\mathcal{K}=\{\mathsf{x}|\omega\succeq0\}$.
 The intrinsic $\mathsf{K}$ is recovered using the Cholesky decomposition on~$\mathsf{\omega}$. 

\subsection{Non-Rigid Structure-from-Motion (NRSfM)}\label{subSec:Nrsfm}
Method in~\cite{parashar2016isometric} proposes a framework of modeling NRSfM as a POP using the geometric prior of isometry. It models the rest shape as a Riemannian manifold and the deformed shapes as isometric mappings of the rest shape. The isometric deformation prior is then
expressed by relating the metric tensor, the Christoffel symbols parametrized by $k_1,\ k_2 \in\mathbb{R}$ and the inter-image registration warps in camera coordinates. We borrow the notations and definitions of the Christoffel symbol parameterization from the original problem formulation~\cite{parashar2016isometric}.
As in \cite{parashar2016isometric}, we summarize the resulting system of polynomials as,
\begin{equation}
\label{eq:iso-nrsfm}
    P_i(\mathsf{x}) = \mathcal{P}(\mathsf{x}) - \left\{\mathcal{P}_{1i}(\mathsf{x})\right\}_{i=2}^n
\end{equation}
\eqref{eq:iso-nrsfm} is in fact a system of $n-1$ independent quartic polynomials for $n$ images that relates the point-wise inter-image warp measurements ${\mathsf{q}}_{i=2}^n$ to the Christoffel symbols parameterized by $\mathsf{x} = [k_1\ k_2]^\top$. Solving
\eqref{eq:iso-nrsfm} amounts to solving the isometric NRSfM problem as $k_1,\ k_2$ can be used to obtain the Jacobian of each shape with respect to the reference image coordinates in the camera frame. One can then compute the surface normal at each corresponding point of the $n$ shapes. We solve consensus maximization and the non-minimal problem of \eqref{eq:iso-nrsfm} to solve the NRSfM problem.

\section{Discussion}\label{sec:Discussion}
The $\mathcal{L}_1$ constraint of~\eqref{eq:nonMinProb} (and the ones that follow) 
can be extended to $\mathcal{L}_\infty$ by replacing all $\epsilon_i$ variables by a single variable $\epsilon$, which is a common practice in convex optimization~\cite{boyd}. Similarly, $\mathcal{L}_2$ norm can also be minimized by imposing the conic constraints on stacked linear vector of polynomials $\mathsf{v}_i(\mathsf{x}) = [{\text{trace}(\mathsf{G}_{1i}\mathsf{Y})},\,\ldots, {\text{trace}(\mathsf{G}_{mi}\mathsf{Y})}]^\intercal$ of~\eqref{eq:nonMinProbQuad} (resp. of~\eqref{eq:nonMinProbGen}) for $m$ polynomials of the $i^{th}$ measurement such that $\norm{\mathsf{v}_i(\mathsf{x})}_2\leq \epsilon_i$.  
In this process, we introduce one variable for each measurement, therefore the time complexity of our method is expected to be linear on the number of measurements. Although these auxiliary variables may seem to introduce overhead, they are in fact helpful.
These variables allow us to express the non-minimal problems as an SOCP. The SOCP constraints enable us to easily extend our non-minimal formulation to that of consensus maximization, where variables $\epsilon_i$ naturally turn into inlier threshold. Alternatively for non-minimal problems, one could choose to minimize $\sum_i{\norm{\mathsf{v}_i(\mathsf{x})}^2}$ (or even $\sum_i{p_i(\mathsf{x})^2}$)  directly, using the standard SOS optimization methods. Such formulation not only compromises the flexibility, but also adds burden by increasing the degree of polynomials. 
\section{Experiments}\label{sec:results}
We conducted several experiments with synthetic and real datasets. The synthetic data was generated using the toolbox of \cite{briales2017convex}, in the very similar setups, whereas, the real datasets used are Fountain and Herz-Jesu~~\cite{Strecha2008}, and Flag~\cite{White2007}, Hulk and
Tshirt~\cite{parashar2016isometric}. In the case of real datasets, only the outliers were synthetically generated for quantitative evaluations. All our results for non-minimal solvers are generated after the consensus maximization, except for the minimal case. 
Our framework is implemented in MATLAB2015a and all the optimization problems are solved using MOSEK~\cite{mosek2012mosek}. All experiments were carried out on a 16GB RAM Pentium i7/3.40GHz. Some qualitative results obtained for different applications, discussed later, are shown in Figure~\ref{fig:qualitative_results}.

\begin{figure}
    \centering
 \includegraphics[height=2cm]{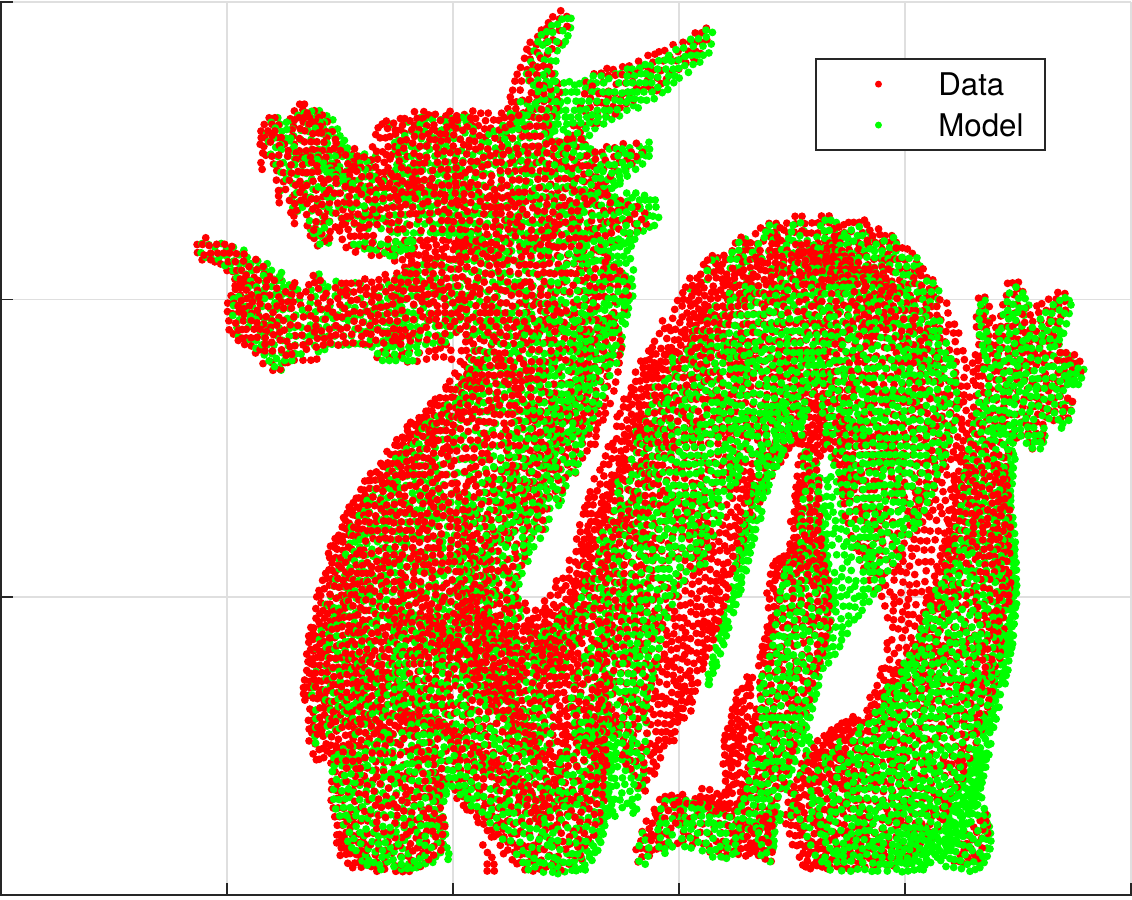}
 \includegraphics[height=2cm]{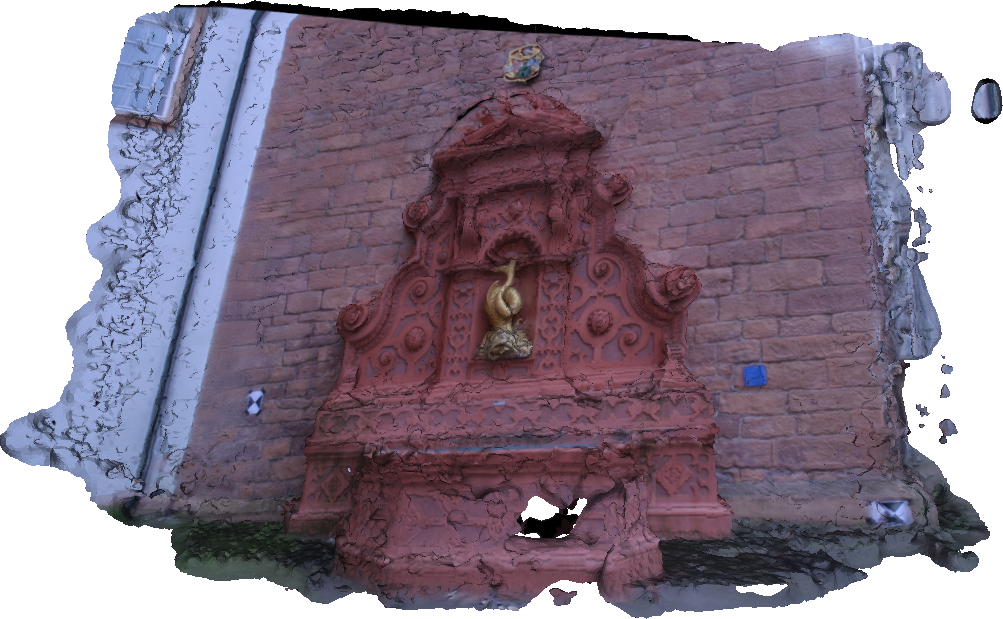}
 \includegraphics[height=2.3cm]{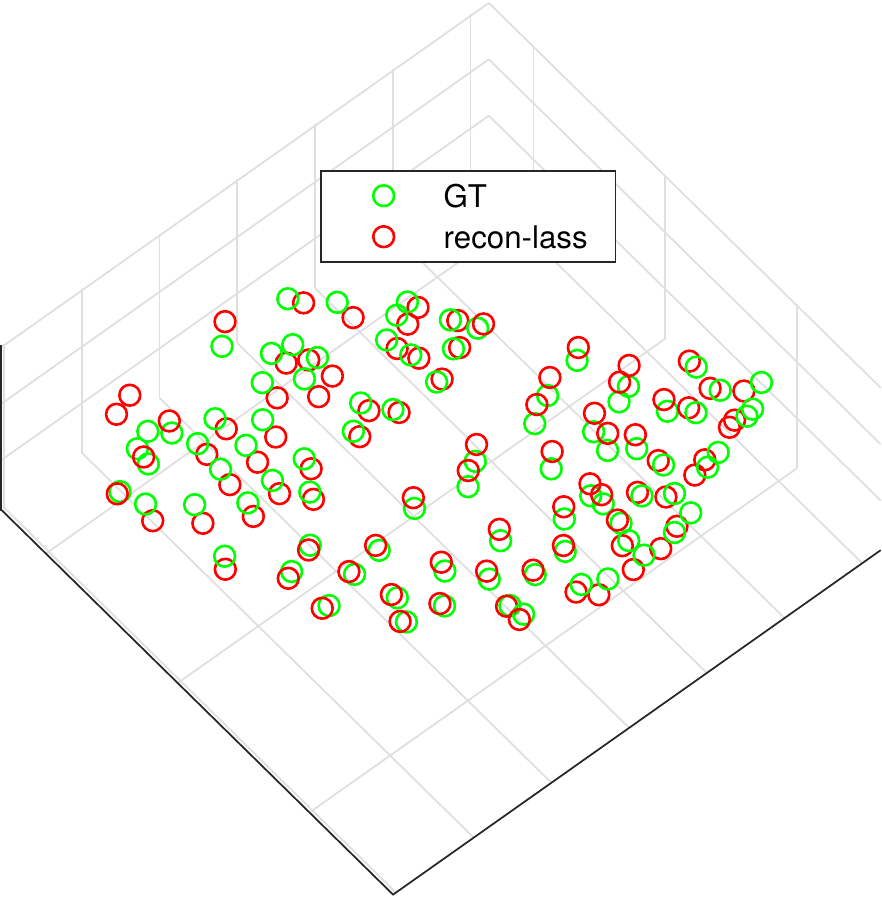}
    \caption{Non-minimal results obtained using our framework. From left to right: rigid registration (model: green; data: red) on the public dataset~\cite{chin2016guaranteed}; 3D reconstruction of the Fountain sequence obtained using the estimated camera intrinsics; non-rigid 3D reconstruction (ground truth: green; reconstruction: red) of tshirt  with Lasserre's relaxation.}
    \label{fig:qualitative_results}
\end{figure}

\subsection{Rigid Body Transformation}
We conducted a large amount of synthetic experiments to explore the behaviour of the proposed framework on rigid body transformation estimation, both for non-minimal and consensus maximization problems. This is mostly because there exists three different non-minimal solvers, namely \textbf{Briales'17}~\cite{barath2017minimal}, \textbf{Olsson'08}~\cite{olsson2008solving} and \textbf{Olsson-BnB}~\cite{olsson2009branch}, comparison against them is one of our interests. In the first experiment, we tested performance of different methods in the minimal setup with varying levels of noise. The obtained results are reported in Figure~\ref{fig:minimalRigid}. Note that the problem of rigid body transformation is a 3-point problem. One can observe for Figure~\ref{fig:minimalRigid} that our method performs very competitively against the globally optimal method \textbf{Olsson-BnB} in rotation and translation estimation, whereas, our method is very competitive in time, shown in Figure~\ref{fig:Time-non-minimalRigid} (left),  with \textbf{Briales'17}. Experiments with \textbf{Olsson'08} shows that this method does not support the minimal setup.

\begin{figure}
\includegraphics[width=4.2cm]{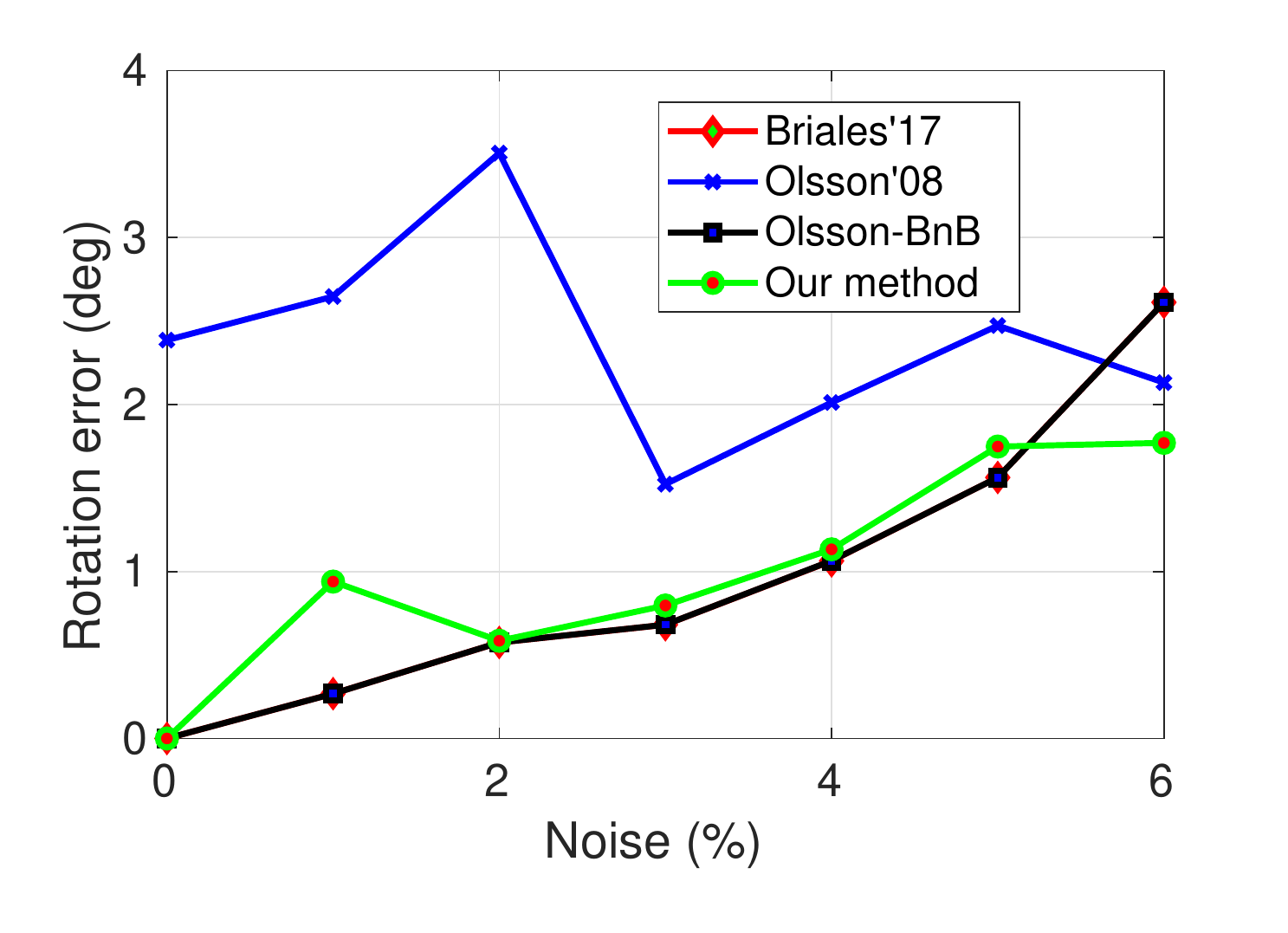}
\includegraphics[width=4.0cm]{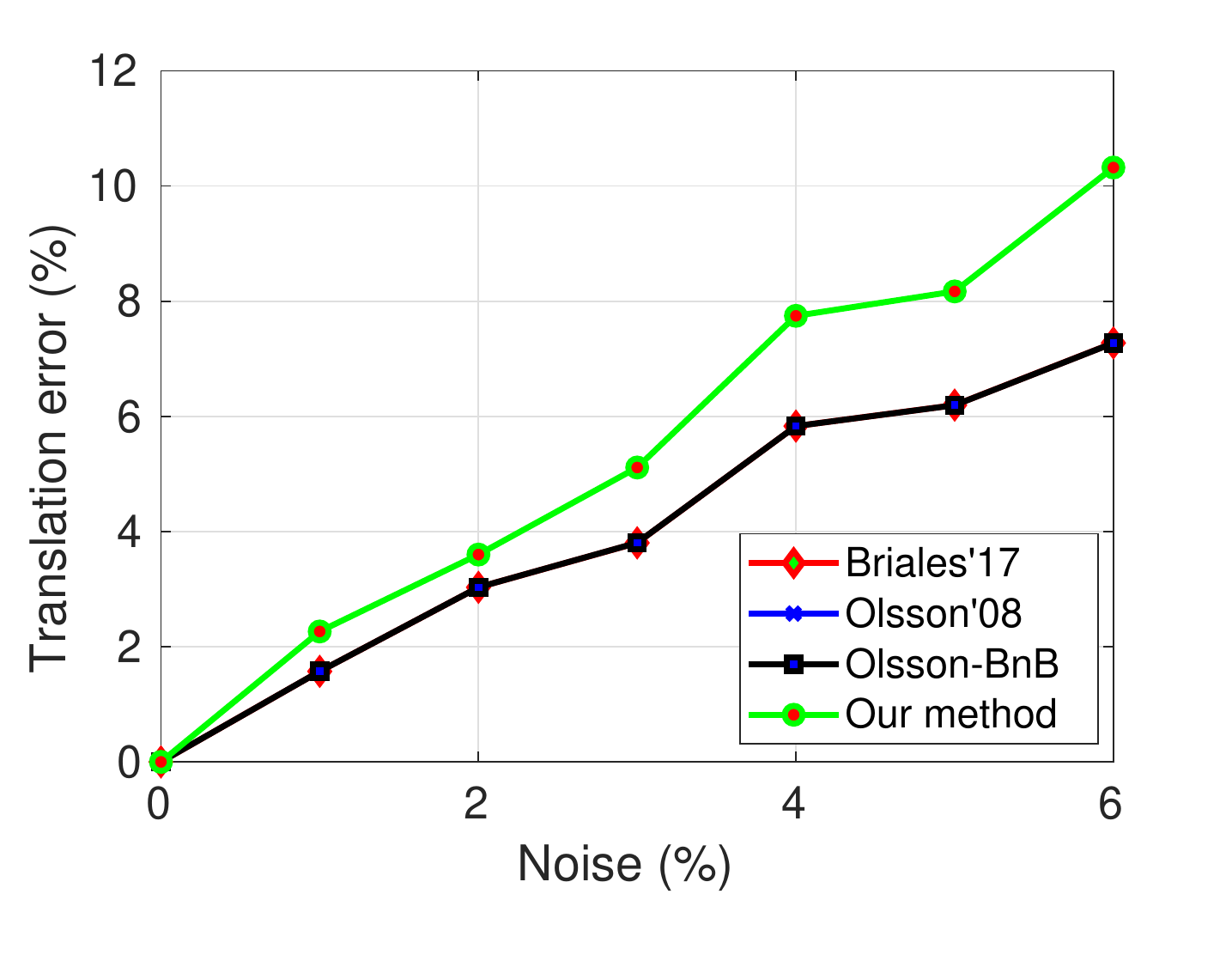}
\caption{Results for minimial (3 points) rigid body transformation estimation. Left to right: noise vs. errors in rotation angles and translation for four different methods.\label{fig:minimalRigid}}
\end{figure}

Experiments for non-minimal setups were also conducted, with similar measures as that of minimal case, by varying the number of points for fixed noise level. These experimental results are reported in Figure~\ref{fig:non-minimalRigid}. Here we observe that our method still behaves similar to \textbf{Olsson-BnB} in accuracy. Note that the time, shown in Figure~\ref{fig:Time-non-minimalRigid} (right), for our method increases linearly with the number of points. The reason behind this has already been discussed in Section~\ref{sec:Discussion}, where we also suggest techniques 
to speed up by compromising the flexibility. 
Unlike the  minimal case, \textbf{Olsson'08} performs very similar to \textbf{Olsson-BnB} and \textbf{Briales'17}.

\begin{figure}[h]
\includegraphics[width=4.1cm]{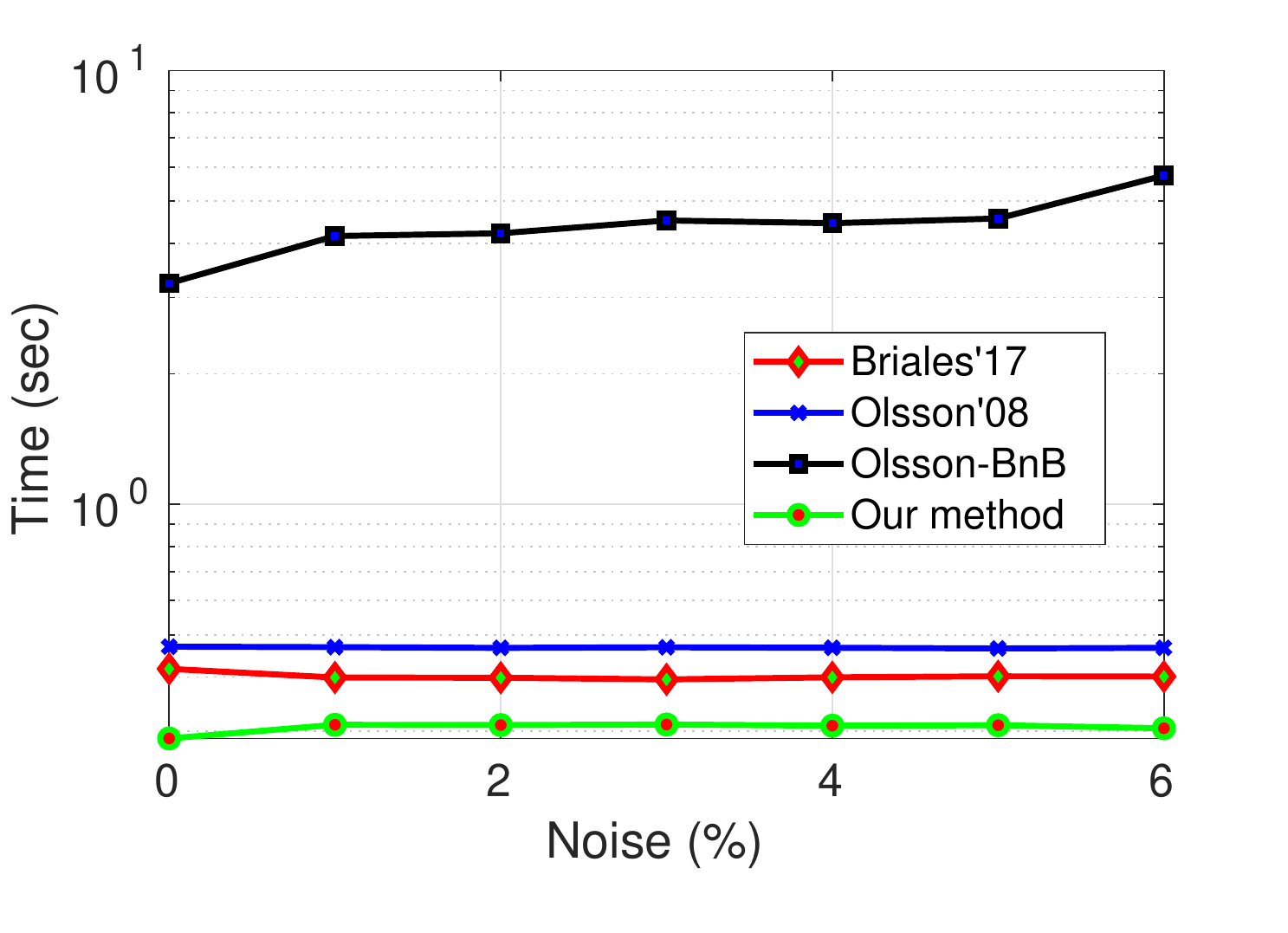}
\includegraphics[width=4.1cm]{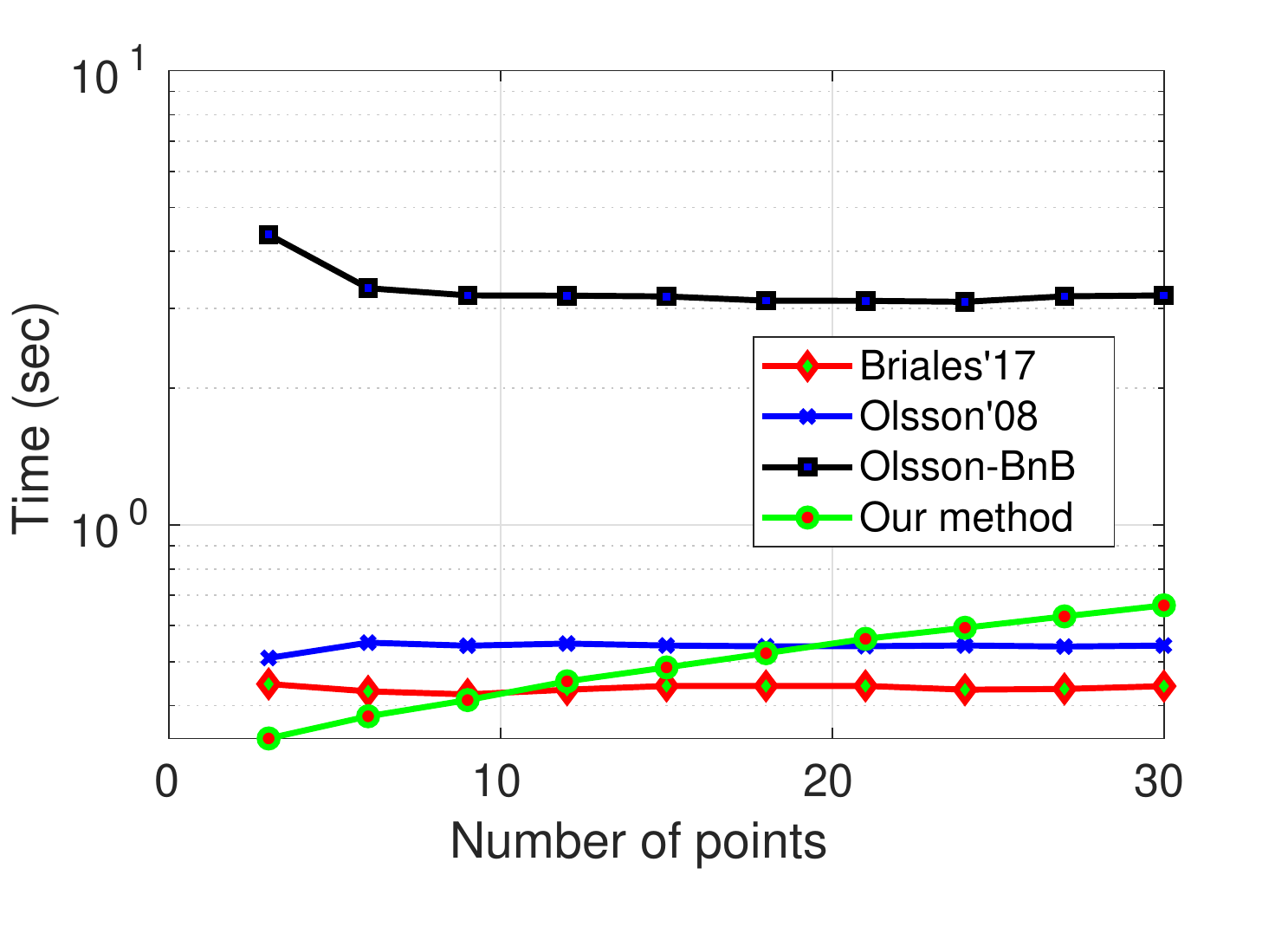}
\caption{Time taken for minimal case (3 points) with noise and non-minimal case (of fixed 1.0\% noise) with number points. Results for rigid body transformation estimation.  \label{fig:Time-non-minimalRigid}}
\end{figure}
\begin{figure}[h]
\includegraphics[width=4.2cm]{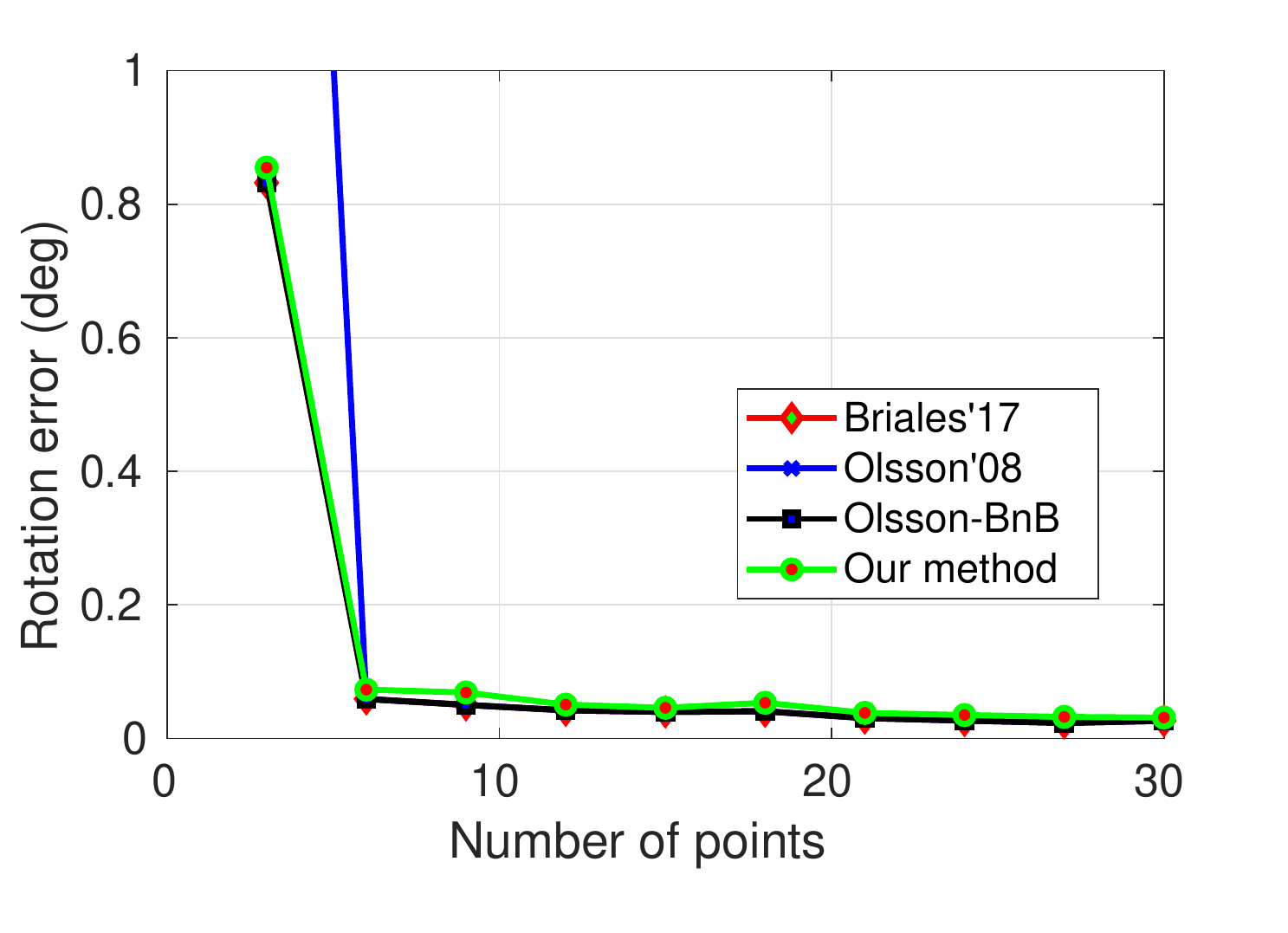}
\includegraphics[width=4.0cm]{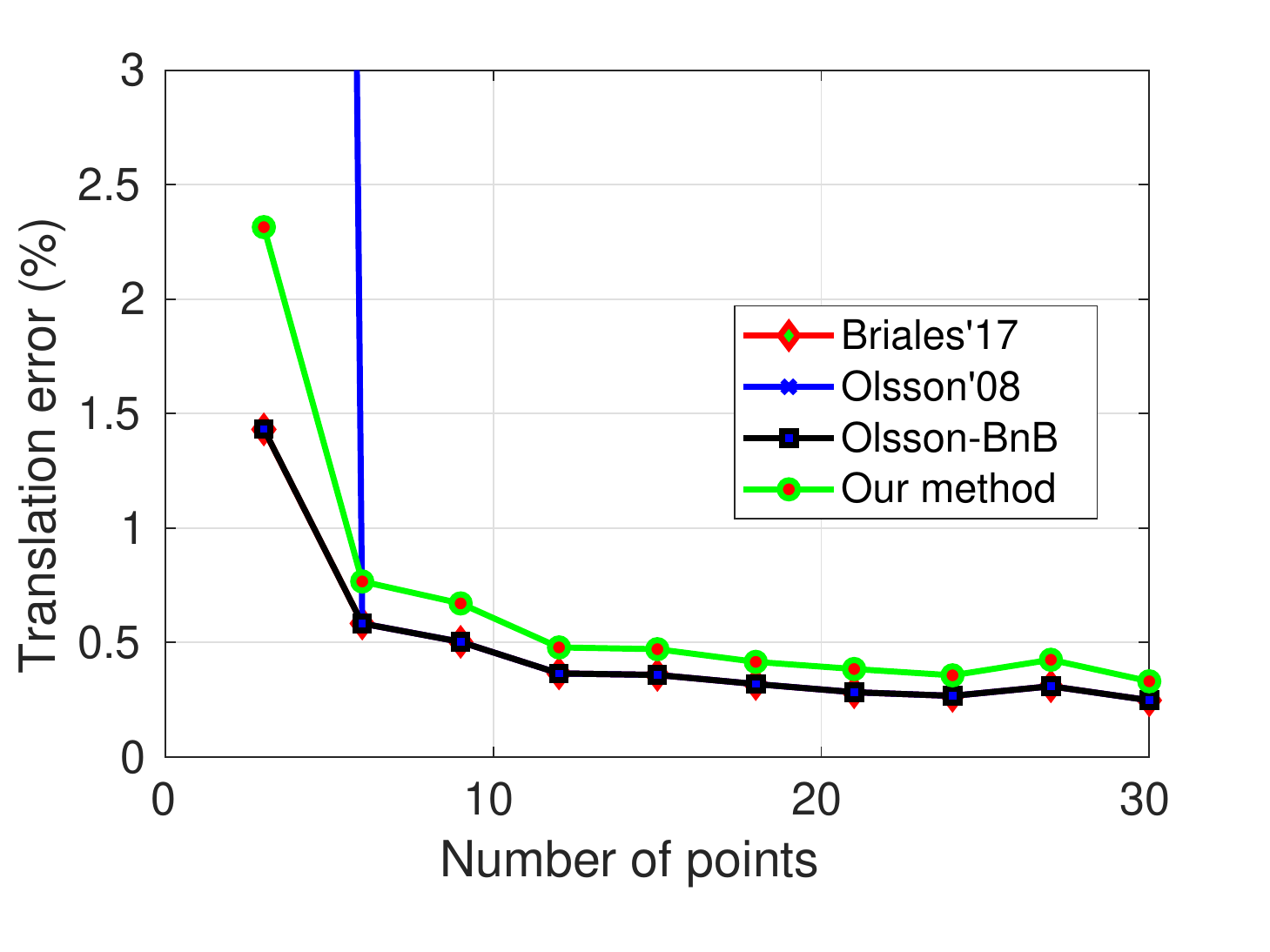}
\caption{Non-minimal (1.0\% noise) results for rigid body transformation estimation. Errors in rotation angles and translation vs. increasing number of points.\label{fig:non-minimalRigid}}
\vspace{-1mm}
\end{figure}

Using the flexibility offered by our method, we conducted experiments with increasing amount of outliers shown in Figure~\ref{fig:utliersNonMinConMax}. Our framework provides consistent results even when outliers are present in the measurements, as it passes through the process of consensus maximization. Although it is unfair to compare with non-minimal solver against a framework that offers both consensus maximization and non-minimal solution, Figure~\ref{fig:utliersNonMinConMax} shows the impact of outliers when only non-minimal solvers are used. Note that our method performs  well, even when the data is contaminated with 90\% outliers (10 inliers and 90 outliers). 
As expected, the non-minimal solvers remain consistent in time with the increase of outliers, of which they are indifferent, as shown in  Figure~\ref{fig:timeOutliersNonMinConMax} (left). On the other hand, our framework filters outliers prior to solving the non-minimal problem. Our method was also compared with a global consensus maximization \textbf{Speciale'17}~\cite{speciale2017consensus}, where both methods perform very similarly in terms of optimality, whereas, \textbf{Speciale'17} performs faster due to its problem specific constraints. The time comparison with increasing number of outliers is shown in  Figure~\ref{fig:timeOutliersNonMinConMax} (right).   

\begin{figure}[h]
\includegraphics[width=4.1cm]{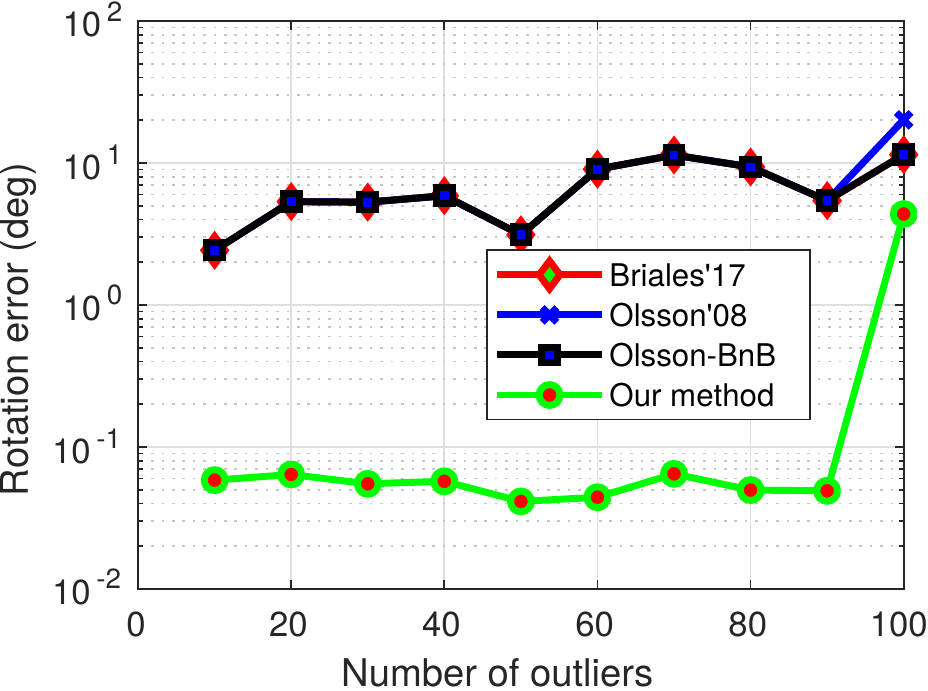}
\includegraphics[width=4.1cm]{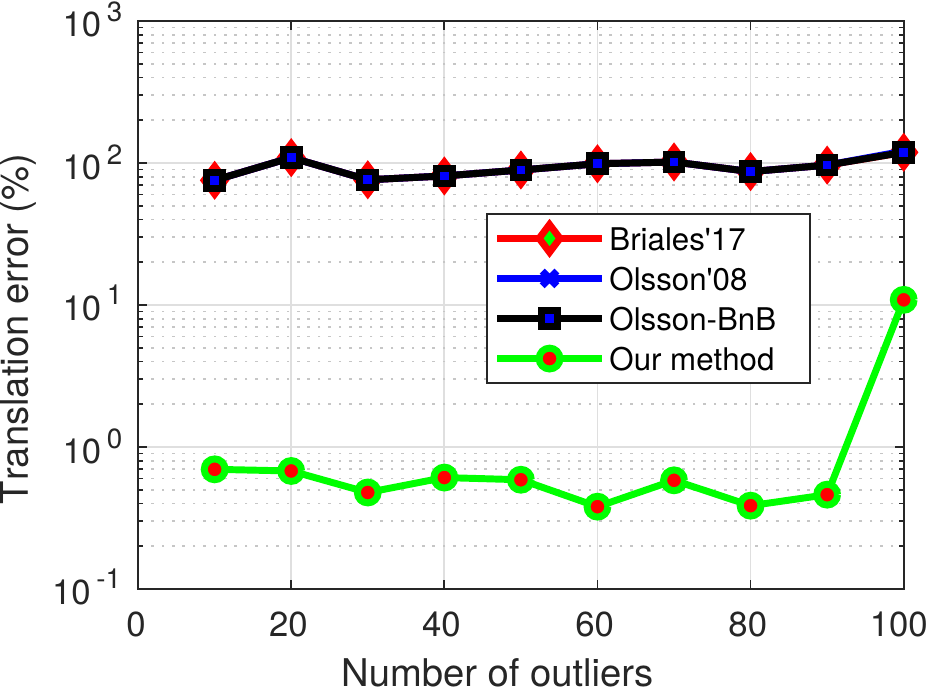}
\caption{Our framework vs. other non-minimal solvers with increasing outliers (and fixed 10 inlier correspondences) for rigid body transformation estimation. Errors in rotation and translation estimation with increasing outliers\label{fig:utliersNonMinConMax}.}
\vspace{-3mm}
\end{figure}

\begin{figure}[h]
\includegraphics[width=4.0cm]{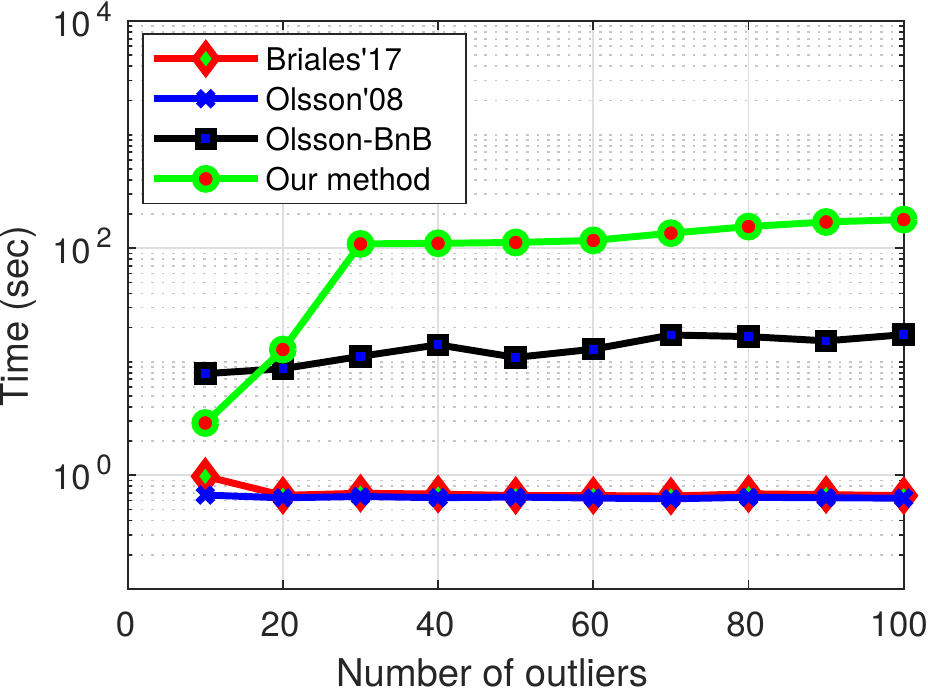}
\includegraphics[width=4.2cm]{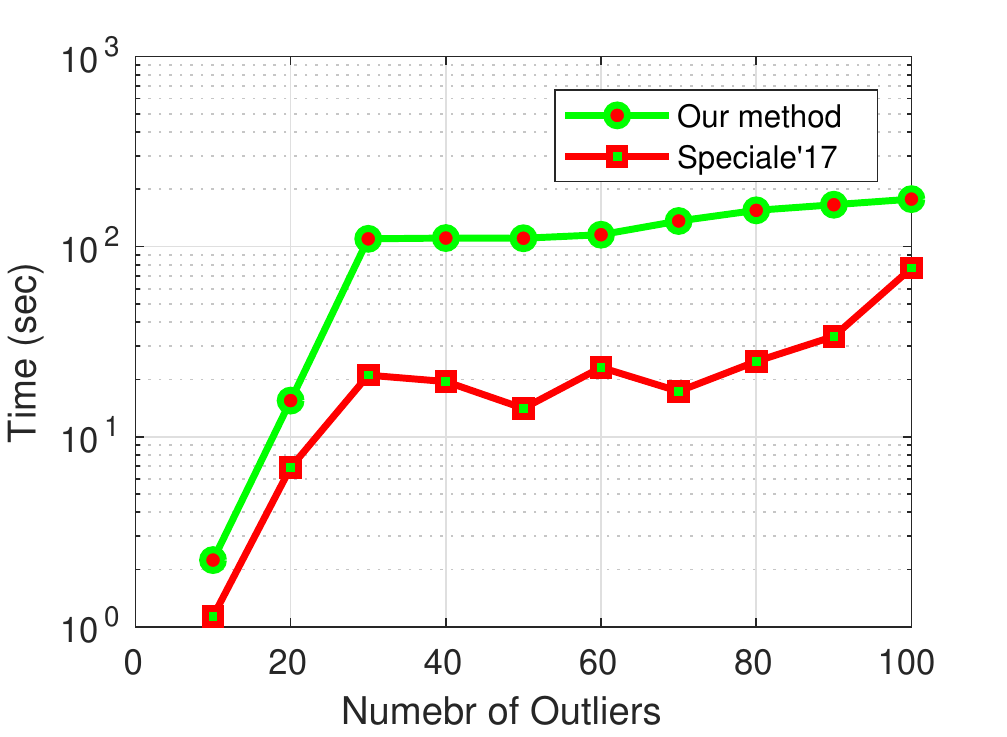}
\vspace{-3mm}
\caption{ Time comparison with global non-minimal solvers (left) and global consensus maximization method (right), with increasing outliers (and fixed 10 inlier correspondences) for rigid body transformation estimation.\label{fig:timeOutliersNonMinConMax}}
\vspace{-3mm}
\end{figure}

\subsection{Camera Autocalibration}
We conducted experiments for camera autocalibration on two real datasets: Fountain and Herz-Jesu from~\cite{Strecha2008}. The results obtained by our framework is compared against that of three  existing global methods for autocalibration: \textbf{LMI direct} method from~\cite{habed2014efficient}, \textbf{Rank-3 direct} method from~\cite{chandraker2007autocalibration},  and a \textbf{Stratified} method from~\cite{chandraker2007globally}. All of these three methods assume that the projective motion required for autocalibration is free of outliers, and minimize the global cost in an optimal manner. Therefore, they can be thought of as non-minimal solvers. The projective reconstruction required for~\cite{habed2014efficient}, ~\cite{chandraker2007autocalibration} and ~\cite{chandraker2007globally} were obtained using~\cite{oliensis2007iterative}. We also compared our method with two local methods for camera calibration, namely~\textbf{Practical} form~\cite{gherardi2010} and \textbf{Simplified-Kruppa} from~\cite{lourakis1999camera}.  We provide the quantitative results for calibration accuracy, by computing errors on the camera intrinsic parameters. Three different error metrics are used: errors in focal length $\Delta f$, principal point $\Delta uv$, skew $\Delta s$. Obtained results by all six methods are reported in Table~\ref{tab:calibCompare}.

 For the consensus maximization, we synthetically introduced the outlier Kruppa's equations. The ourliers are added in an increasing manner up to 80\%. Our consensus maximization method, with Shor's relaxation, is able to detect all inliers and outliers correctly for both datasets. We tracked the number of pessimistic and optimistic inliers for increasing BnB iterations, which is shown in the Figure~\ref{fig:camCalibPlots} (left). The consensus maximization experiments are compared with a global consensus maximization method for autocalibration, namely \textbf{Paudel'18} from~\cite{paudel2018sampling}. For a fair comparison, the constraint on the bounds of the DIAC are chosen as in~\cite{paudel2018sampling}. We therefore assume that the focal length lies within [1 10] interval relative to the image size, aspect ratio lies between 0.7-1.25,  principal point lies within a radius of $(\frac{1}{4})^{th}$ of the image size from the image center, and the skew is close to zero. With these assumptions, we derive bounds on DIAC using the interval analysis arithmetic~\cite{alefeld2000interval}, similarly for the corresponding variable $\mathsf{X}$ in~\eqref{eq:samConMaxQad}.
 Note that these are valid assumptions in most cases for camera calibration. Our results as well that of~\textbf{Paudel'18} are reported in Figure~\ref{fig:camCalibPlots} (right) for increasing outliers, showing the detected inliers and time taken by both methods. These results also show that our framework can greatly benefit if the bounds on the sought parameters are also known. In fact, this may very often be the case in many 3D vision problems.
 
\begin{table}[t]
\scriptsize
\centering
\setlength\tabcolsep{4.5pt}
\begin{tabular}{|c|c|c|c|c|c|c|}
\hline
{Dataset} & Method & $\Delta f$ & $\Delta uv$ & $\Delta s$  & Time(s)\\
\hline
\hline
\multirow{2}{*}{} 
& Practical~\cite{gherardi2010}&0.0117&0.0149&0.0037&0.36\\
\cline{2-6}
& Stratified~\cite{chandraker2007globally} 
&0.0777 & 0.0969 & 0.0125 & 388.24\\
\cline{2-6}
Fountain& Rank-3 Direct~\cite{chandraker2007autocalibration}&0.0100&0.0147&0.0044&5.75\\
\cline{2-6}
(11-views)& LMI Direct~\cite{habed2014efficient} 
&0.0506&0.0269&0.0024&156.88\\ 
\cline{2-6}
& Simplified-Kruppa~\cite{lourakis1999camera} 
& 2.93e-05 & 0.0069 & 3.23e-05  & 1.88 \\
\cline{2-6}
& Ours  
& 0.0060 & 0.0061  &  6.46e-04 &  0.83\\
\hline
\multirow{ 2}{*}{} 
& Practical~\cite{gherardi2010}&0.0017&0.0113&0.0068& 0.36\\
\cline{2-6}
& Stratified~\cite{chandraker2007globally} 
&0.7231 & 0.4462 & 0.3232  & 380.72 \\
\cline{2-6}
Herz-jesu& Rank-3 Direct~\cite{chandraker2007autocalibration}&0.0026&0.0096&0.0069&1 6.54\\
\cline{2-6}
(8-views)& LMI Direct~\cite{habed2014efficient} 
&0.0138&0.0086&0.005&115.61\\
\cline{2-6}
& Simplified-Kruppa~\cite{lourakis1999camera} & 4.46e-05 & 0.0069 & 3.40e-05    & 0.53\\ 
\cline{2-6}
& Ours &  0.0128 & 0.0220  &  8.77e-04 &   0.81 \\
\hline
\end{tabular}
\caption{Our framework vs. two local~\cite{gherardi2010, lourakis1999camera} and three global~\cite{chandraker2007globally,chandraker2007autocalibration,habed2014efficient}  methods in non-minimal setup of camera autocalibration on  two real datasets. Views-1 are the number of measurements used in all the experiments. \label{tab:calibCompare}} 
 \end{table}
 
 \begin{figure}
     \centering
     \includegraphics[width=3.5cm]{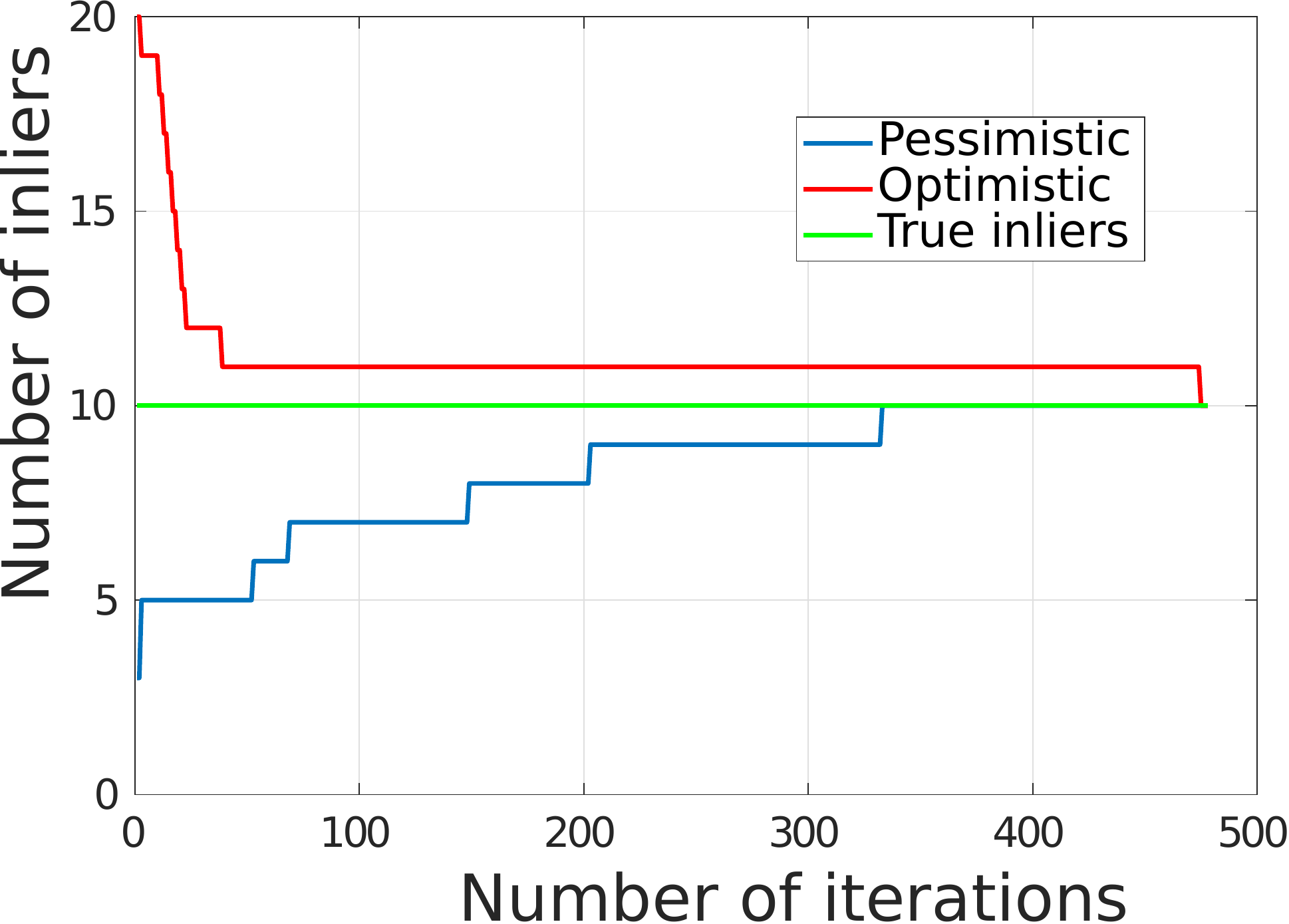}
     \includegraphics[width=4.7cm]{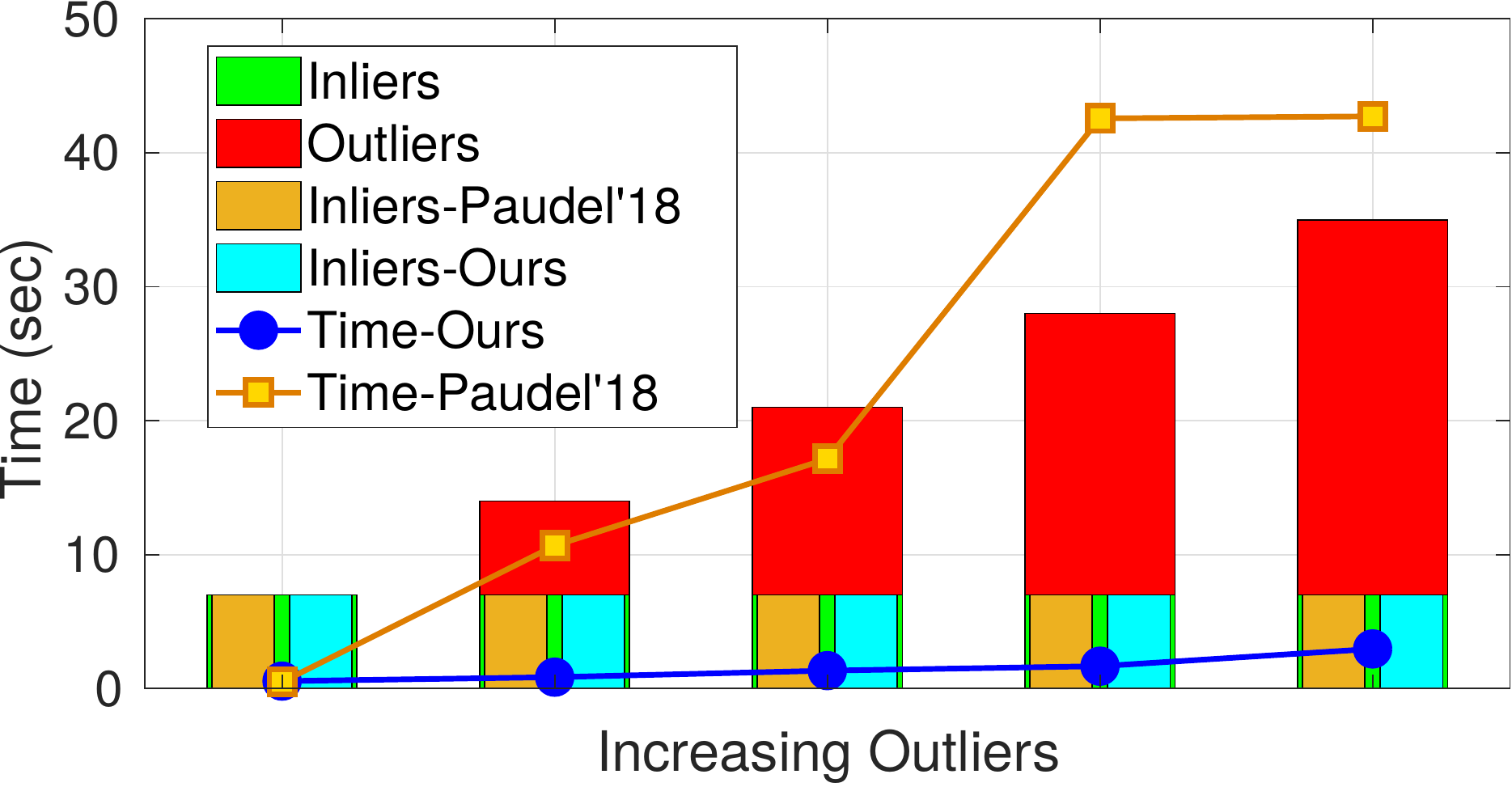}
     \caption{Left: convergence graph for Fountain dataset. Right: time taken and inliers detected by our method and ~\cite{paudel2018sampling}, with increasing outliers and fixed inliers on Herz-jesu.}
     \vspace{-3mm}
     \label{fig:camCalibPlots}
 \end{figure}
 
 \subsection{Non-Rigid Structure-from-Motion}
Experiments with NRSfM were conducted in the similar setup of \textbf{Parashar'18}~\cite{parashar2018isometric}. From $N$ images,  $2N-2$ polynomials of 2 variables on  Christoffel symbols $k_1$ and $k_2$, i.e.~\eqref{eq:iso-nrsfm}, were extracted using the theory presented in Section~\ref{subSec:Nrsfm}. For these polynomials, we first computed the non-minimal solution using the proposed framework with Shor's relaxation. However, the Shor's relaxation alone did not provide us satisfactory solutions.  Note that~\eqref{eq:iso-nrsfm} are expressed using the second-order measurements of the local correspondences, therefore their coefficients are very sensitive to noise. Moreover, polynomials of~\eqref{eq:iso-nrsfm} are solved for each point independently. In such cases, non-minimal solvers are very valuable, where the over-determined system of quartic polynomials are provided by multiple measurements across views.
Therefore, the method of~\cite{parashar2018isometric} uses the hierarchy of SOS relaxations to obtain the desired solution. In this context, we observed that the  Lasserre's relaxation is indeed necessary, because the non-minimal solutions obtained using Shor's relaxation were not on par with that of SOS method. Therefore, we show the difference between Shor's and Lasserre's relaxations for NRSfM. In previous two problems, their differences were not very significant in terms of accuracy, but were so in speed.

We use the datasets, Flag~\cite{White2007}, Hulk and
Tshirt~\cite{parashar2016isometric} to evaluate our non-minimal solver. Our obtained solutions are compared with the baseline \textbf{Parashar'18}~\cite{parashar2018isometric} in  Figure~\ref{fig:nonRigidNonMin} for the non-minimal case, where the reconstructed depth error is shown across views on the left, and across datasets on the right. Note from Figure~\ref{fig:nonRigidNonMin} that the baseline method performs significantly better compared to only the Shor's relaxations-based method. However, when we use Lasserre's relaxation of order one (i.e. $s=1$ in~\eqref{eq:relaxLasser}), the solution becomes very competitive to  that of the hierarchy of SOS relaxations used in~\cite{parashar2018isometric}. This shows that the higher order relaxations do not offer significant improvements at least for this particular formulation of NRSfM.
In fact, it is generally agreed that NRSfM is a very challenging problem in 3D vision. Our observation from real dataset shows that higher order relaxations may not be necessary, in a wide range of applications, including the case of isometric NRSfM. Of course, our observation may be biased from only three problems that we have tested. Especially, second degree polynomials for rigid body transformation and camera calibraion, and degree four polynomials that involve only two unknown variables at a time in the case of NRSfM. Nevertheless, one needs to be aware that lower order relaxations could be tried first,  before extending into the higher ones. If the lower order relaxations already offer satisfactory solutions, this not only allows one to obtain non-minimal solutions faster but also allows to use rather non obvious polynomial problems within the global framework of consensus maximization by using the BnB paradigm.  

\begin{figure}
    \centering
  \includegraphics[width=4.1cm]{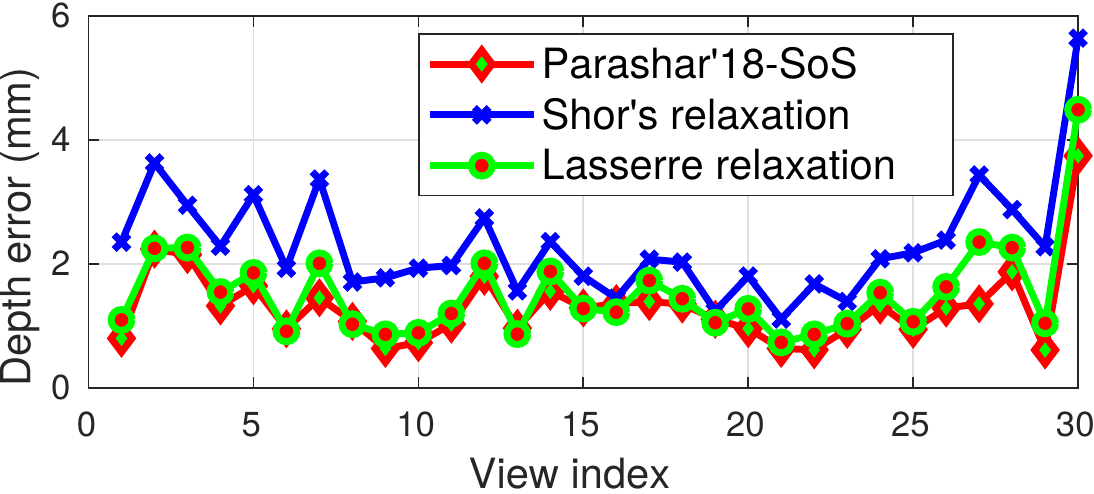}
\includegraphics[width=4.1cm]{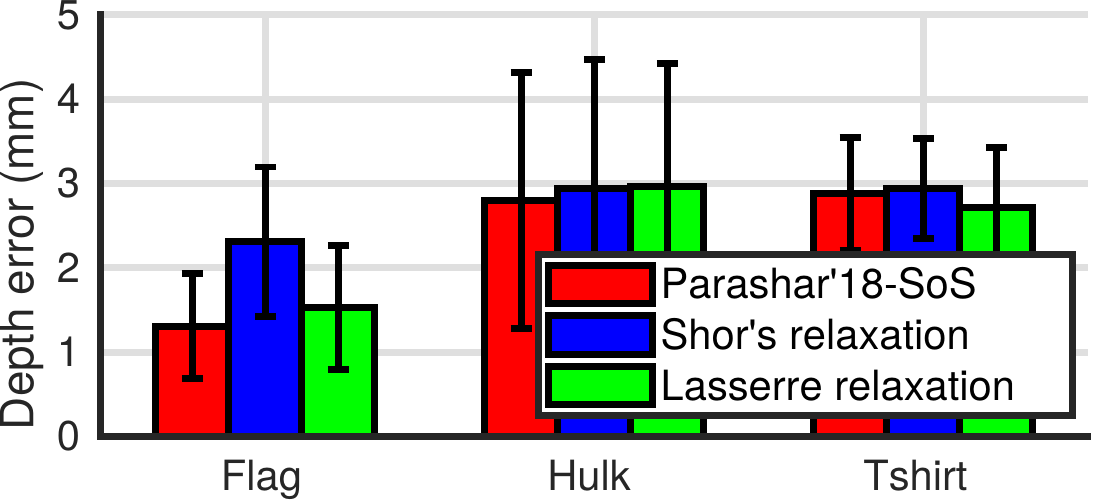}
    \caption{Results for non-minimal isometric non-rigid reconstruction. Left to right: depth and normal errors  for flag dataset, depth errors of three methods on different datates. \label{fig:nonRigidNonMin}}
    \vspace{-3mm}
\end{figure}

To support our claim that even the lower degree relaxations are sufficient for consensus maximization, we conducted several experiments with various amount of synthetic outliers on aforementioned real datasets. For the setup of NRSfM we were able to detect almost all outliers when outliers up to 70\% were introduced. Results for one such instance of consensus maximization, with 50\% outliers views, are reported in Figure~\ref{fig:nonRigidConvMax} on the right and the estimated depth error across inlier views on the left.  As expected, results with Lasserre's relaxations are significantly better than that of the Shor's relaxation. Nevertheless, Shor's relaxation still shows its expected behaviour. 
Although, it may not be very interesting to compare the results of a non-minimal solver, i.e. \textbf{Parashar'18}, against that of a consensus maximization method in general, it is somehow different in this case. One aspect that we have not yet discussed is the power of SOS methods. It is generally known that the SOS solvers are robust to noise~\cite{parrilo2000structured}, SOS solver appears to be relatively stable even in the presence of outliers. In the first glance, we thought this could be because of the iterative refinement of outliers within the algorithm of~\textbf{Parashar'18}. This turns out not to be the case. The SOS method consistently performed reasonably well, even when the iterative refinement process was removed. This is because of two reasons: one is the robustness of SOS methods towards a moderate amount of outliers; other is the failure to obtain a valid solution means in some sense the detection of outliers. Such outlier detection  is specific to the problem formulation of~\cite{parashar2018isometric} because the POP in this case is defined point-wise. 

\begin{figure}[h]
\includegraphics[width=4.1cm]{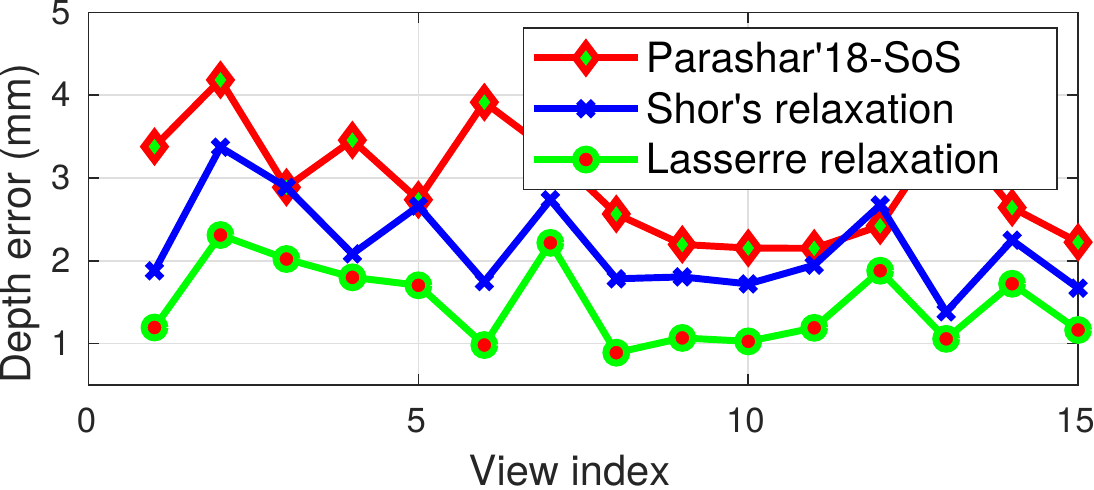}
\includegraphics[width=4.1cm]{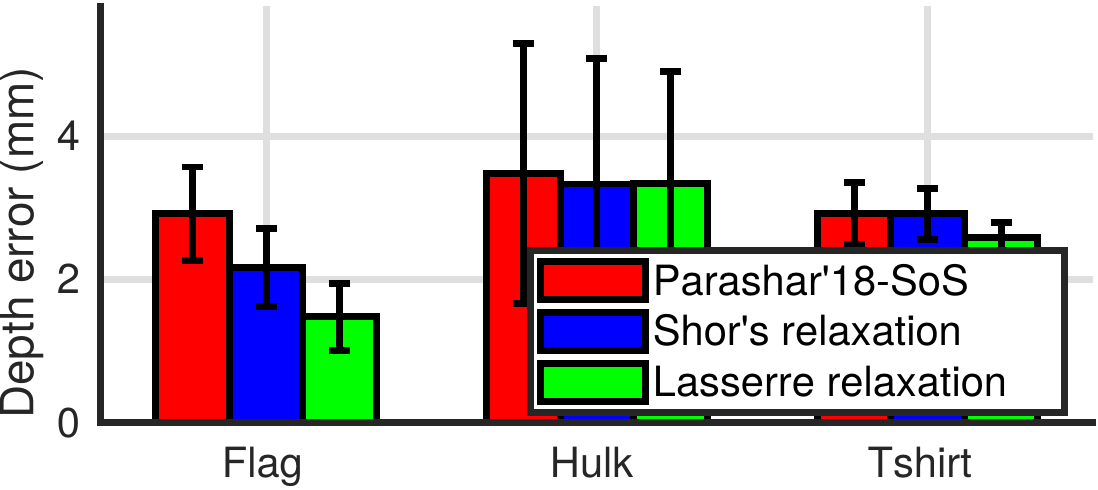}
\caption{Consensus maximization results for isometric non-rigid reconstruction. Left to right: depth and normal errors  for flag dataset, depth errors of three methods on different datasets with 50\% synthetically added outliers.\label{fig:nonRigidConvMax}}
\vspace{-3mm}
\end{figure}

\section{Conclusion}
In this paper, we demonstrated that a proper usage of the existing tools in numerical algebraic geometric POP can be used in a straightforward way in many 3D vision problems. This is achieved by using the known ``good approximate solutions" expressed as convex SDP formulation. On the one hand, we argue that the existing solutions can be used in their current form to solve many 3D vision problems, especially for which the optimal non-minimal solvers have not been devised yet. We also discussed about a good practice to formulate POP. Using our theoretical reasoning, we made suggestions for a generic relaxed non-minimal solver that is suitable for may 3D vision problems. We further argued that the standard method for polynomial relaxations are indeed powerful, which can also be used for consensus maximization in a deterministic manner. We have supported our suggestions/claims using several experiments of three diverse problems in 3D vision. We reach to  this conclusion mainly because many polynomials in 3D vision problems are inherently of low degrees with limited variables. 

\paragraph*{Acknowledgements.}
This research has received funding from the EU Horizon 2020 research and innovation programme under grant agreement No.\ 820434. The research was also funded by the ETH Zurich project with SPECTA.

{\small
\bibliographystyle{ieee_fullname}
\bibliography{egbib}
}

\end{document}